\documentclass[accepted]{uai2026} 
                        
\usepackage[american]{babel}

\usepackage[natbib, style=authoryear-comp, maxcitenames=2, uniquelist=false]{biblatex}
\let\cite\citep
\usepackage{amsfonts,amsmath,amssymb,amsthm}
\usepackage{algorithm}
\usepackage{algpseudocode}
\usepackage{booktabs}
\usepackage{graphicx} 
\usepackage{appendix}
\usepackage{xcolor}
\usepackage{bbm}
\usepackage{multicol}

\DeclareMathOperator*{\argmax}{arg\,max}
\DeclareMathOperator*{\argmin}{arg\,min}
\newcommand{\iou}{\textnormal{IoU}} 
\newcommand{\xt}{X^{\textnormal{test}}} 
\newcommand{\yt}{Y^{\textnormal{test}}}

\newcommand{\cx}{\mathcal{X}} 
\newcommand{\cq}{\mathcal{Q}} 
\newcommand{\cy}{\mathcal{Y}} 
\newcommand{\ca}{\mathcal{A}} 
\newcommand{\e}{\mathbb{E}}

\theoremstyle{plain}  
\newtheorem{theorem}{Theorem}[section]
\newtheorem{lemma}{Lemma}[section]

\newtheorem{proposition}{Proposition}[section]
\theoremstyle{remark}

\newtheorem{assumption}{Assumption}

\usepackage{chngcntr}
\usepackage{hyperref}
\hypersetup{colorlinks=true, urlcolor=blue, linkcolor=red}

\title{Conformal Prediction Sets for Instance Segmentation}

\author[1,2]{\href{mailto:kerrilu@mit.edu}{Kerri Lu}{}}
\author[4]{\href{mailto:dkluger@mit.edu}{Dan M. Kluger}{}}
\author[1,2]{\href{mailto:stephenbates@mit.edu}{Stephen Bates}{}}
\author[1,3,4]{\href{mailto:sherwang@mit.edu}{Sherrie Wang}{}}
\affil[1]{Laboratory for Information and Decision Systems\\MIT\\Cambridge, MA, USA}
\affil[2]{Department of Electrical Engineering and Computer Science\\MIT\\Cambridge, MA, USA}
\affil[3]{Department of Mechanical Engineering\\MIT\\Cambridge, MA, USA}
\affil[4]{Institute for Data, Systems, and Society\\MIT\\Cambridge, MA, USA}
  
\begin{document}
\maketitle

\begin{abstract} 
Current instance segmentation models achieve high performance on average predictions, but lack principled uncertainty quantification: their outputs are not calibrated, and there is no guarantee that a predicted mask is close to the ground truth.
To address this limitation, we introduce a conformal prediction algorithm to generate adaptive confidence sets for instance segmentation. 
Given an image and a pixel coordinate query, our algorithm generates a confidence set of instance predictions for that pixel, with a provable guarantee for the probability that at least one of the predictions has high Intersection-Over-Union (IoU) with the true object instance mask. We apply our algorithm to instance segmentation examples in agricultural field delineation, cell segmentation, and vehicle detection. Empirically, we find that our prediction sets vary in size based on query difficulty and attain the target coverage, outperforming baselines (naive best parameter and morphological dilation-based methods). We provide versions of the algorithm with asymptotic and finite sample guarantees. Our work is the first to capture structural uncertainty in instance segmentation by constructing confidence sets of diverse segmentation predictions.
\end{abstract}

\section{INTRODUCTION}

\begin{figure*}
  \centering
  \includegraphics[width=0.7\textwidth]{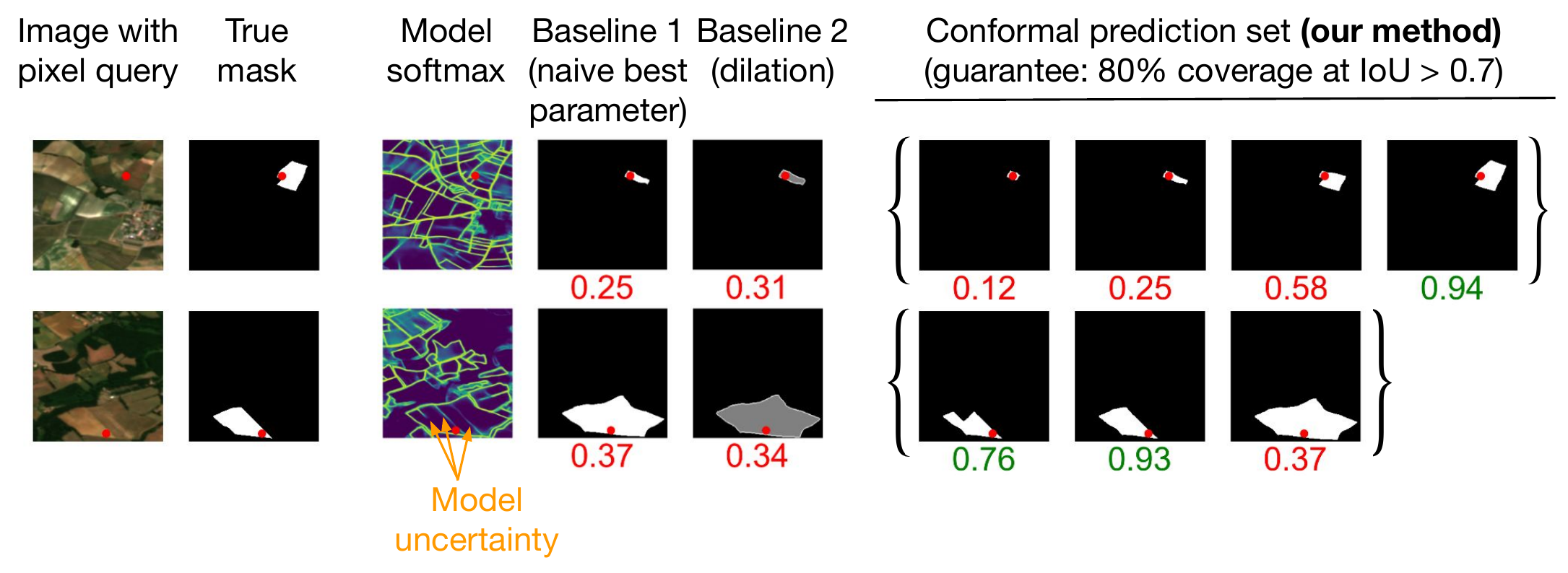}
  \caption{\textbf{Example field instance segmentation queries with true masks, model softmax scores, baseline predictions, and our method's conformal prediction sets (with IoU scores shown below each prediction).} Given an image and a pixel coordinate query, our conformal algorithm generates a confidence set of instance predictions for that pixel, with a provable guarantee for the probability that at least one of the predictions has high IoU (shown in green) with the true object instance mask. The naive best parameter baseline uses the single best model parameter value (over the calibration set) to generate a single prediction, but this often results in low IoU, as in the examples of over- and under-segmentation shown above. The dilation-based conformal baseline, which dilates the single prediction by a fixed number of pixels (determined using the calibration set), also fails to capture this structural ambiguity. By contrast, our method's confidence sets provide diverse predictions and adapt to query difficulty.}\label{fig:summary}
\end{figure*}

Instance segmentation aims to detect and delineate individual object instances in an image, producing a binary mask for each object. Despite achieving high performance on average predictions, current instance segmentation models lack principled uncertainty quantification: their outputs (such as logits or confidence scores) are not statistically calibrated, and there is no guarantee that a predicted mask is close to the ground truth. For example, the widely-used Segment Anything Model \cite{kirillov2023segment} outputs three binary masks for each query, along with their predicted Intersection-Over-Union (IoU) with the true mask, but there is no guarantee that any of these masks are correct or that their predicted IoUs are accurate. 
Similarly, ad-hoc uncertainty metrics, such as instance boundary probability predictions in agricultural field delineation \cite{waldner2021detect}, are not calibrated and may be difficult to interpret. 

Principled methods using conformal prediction have recently been created for segmentation, but remain constrained to modifications of a single model-predicted mask, e.g., through dilation or boundary expansion \cite{davenport2024conformal, mossina2025conformal}. These approaches capture local boundary uncertainty but cannot represent the structural ambiguity that dominates in many real applications, such as whether adjacent regions should be treated as a single object or split into multiple objects (see Appendix \ref{appendix:structural-uncertainty-metrics} for metrics quantifying this type of structural uncertainty). Addressing this gap requires confidence sets that contain qualitatively different masks, not just small perturbations of one mask.

To address this limitation, we introduce a conformal prediction algorithm to generate adaptive confidence sets for instance segmentation. Given an image and a pixel coordinate query, our algorithm generates a confidence set of instance predictions for that pixel, with a provable guarantee for the probability that at least one of the predictions has high IoU with the true object instance mask (Figure \ref{fig:summary}). Our algorithm (Section \ref{sec:method}) generates a diverse set of predictions for each query by varying a tunable parameter of an instance segmentation model, and uses a randomly sampled calibration dataset to find a set of parameter values that is likely to result in high IoU predictions. Furthermore, we remove duplicate predictions to construct prediction sets that vary in size based on query difficulty. We provide versions of the algorithm with asymptotic and finite sample guarantees. We apply our algorithm to instance segmentation examples (Section \ref{sec:experiments}) in agricultural field delineation, cell segmentation, and vehicle detection. 

The main contribution of our work is a novel formulation of conformal prediction for instance segmentation that captures structural uncertainty. Specifically, (1) we identify structural uncertainty not captured by existing conformal approaches based on boundary perturbations; (2) we introduce a conformal prediction formulation that targets this setting by constructing set-valued predictions over multiple model configurations, enabling coverage guarantees even when no single parameter value can achieve the desired IoU threshold; and (3) we provide a practical instantiation by using a parameter sweep and set-cover-based selection, which allows us to probe structural diversity in existing models.


\subsection{Related Work}
\paragraph{Conformal Prediction} Conformal prediction methods generate confidence sets for machine learning model predictions, with provable coverage guarantees. Given an ML model and a labeled calibration dataset of randomly sampled points, conformal prediction uses a heuristic metric to measure uncertainty in model predictions and predict confidence sets for future unlabeled data points sampled from the same distribution \cite{angelopoulos2023conformal}. 

Two broad classes of conformal methods that have been applied to image segmentation are Conformal Risk Control (CRC) \cite{angelopoulos2022conformal, blot2024automatically, luo2025conditional, luo2026enhancing} and Learn Then Test (LTT) \cite{angelopoulos2025learn}. \citet{angelopoulos2022conformal} introduced CRC and applied it to choose a model probability threshold for binary segmentation such that the expected false negative rate falls below a user-specified target. \citet{angelopoulos2025learn} introduced LTT and applied it to choose instance segmentation model parameters with guarantees for mean IoU, recall, and coverage.  

Like our paper’s method, both LTT and CRC are conformal methods that generate model predictions by varying a tunable parameter. However, these methods output a single prediction for each input, while our method outputs adaptive confidence sets with multiple predictions. 
The disadvantage of LTT and CRC is that there does not always exist a single parameter value that results in sufficiently high coverage on the calibration dataset. In our instance segmentation setting, there does not always exist a single parameter value that results in mask predictions with high IoU for a user-specified fraction $(1-\alpha)$ of calibration points. Thus, when attempting to guarantee high coverage on the test set, LTT or CRC can often return an error. 
In contrast, our method is feasible in more scenarios: it searches for a confidence set of multiple parameters such that with probability $(1-\alpha)$, predictions from \textit{at least one} of the parameters will have high IoU.

CRC has the additional disadvantage of requiring a loss function that is monotone or near-monotone in the tunable parameter. Similarly, Risk-Controlling Prediction Sets \cite{bates2021distribution}, a precursor to LTT, also requires monotone loss. However, IoU loss is not monotone or near-monotone. In contrast, our method does not require a monotone loss function, which allows us to use an IoU-based loss function. 

The general LTT framework described by \citet{angelopoulos2025learn} could potentially be adapted to our setting, but this requires greatly increasing the search space of tunable parameters. In our setting, if the tunable parameters are $t_1,...,t_k$, then the LTT framework could be considered where the tunable parameters are elements of the power set $2^{\{t_1,...,t_k\}}$ (which has $2^k$ elements) rather than individual $t_j$ values. However, this setting with an extremely large parameter space is not explored in the LTT paper, whereas we provide a prescriptive algorithm and implementation for producing a confidence set of multiple predictions.

We emphasize that our method extends the line of work introduced by the LTT and CRC frameworks: while prior instantiations of LTT/CRC select a single model configuration, we select a set of configurations whose predictions jointly achieve the desired coverage. Our method is the first to construct confidence sets of diverse segmentation predictions. Making the design choices that enable sets of predictions is non-trivial; we believe the construction of confidence sets rather than single predictions provides novelty and is necessary in order to capture structural uncertainty.

\paragraph{Uncertainty quantification for semantic and instance segmentation} Several recent works use conformal methods to generate prediction sets for image segmentation tasks. For semantic segmentation, \citet{mossina2024conformal} used softmax scores to construct pixel-level confidence sets of class predictions, and \citet{viti2025consign} developed image-level prediction sets that account for spatial correlations. These semantic segmentation methods do not distinguish between different object instances of the same class, and thus are not directly comparable to our method, which predicts a set of masks for each object instance. 

For binary instance segmentation, \citet{davenport2024conformal} combined predicted logit scores with boundary distances to construct inner and outer confidence sets, while \citet{mossina2025conformal} applied morphological dilation to expand a predicted binary mask into a margin-shaped outer confidence set. However, these approaches only guarantee that the true mask is contained within the outer confidence set (and/or contains the inner confidence set) with high probability, and do not guarantee high IoU between the true mask and either confidence set. Furthermore, the confidence set is constrained to be a modification of an existing prediction, such as a dilation or boundary expansion. By contrast, our method constructs sets of alternative instance masks, where members of the set may differ substantially in shape. This allows us to capture more diverse and realistic forms of uncertainty. While our method's predictions are constrained to those generated by varying the tunable parameter, these are substantially more diverse (see Figure \ref{fig:predictions-varying-t}) compared to, e.g., dilating a single predicted mask by a fixed margin.   

\paragraph{Uncertainty quantification for object detection} Conformal methods have also been applied to object detection tasks, allowing users to control the false negative rate \cite{li2022towards, andeol2023confident, andeol2025conformal} or number of false positives \cite{fisch2022conformal} in object bounding box predictions. Another approach is to use conformal prediction to construct confidence intervals for bounding box coordinates \cite{de2022object, timans2024adaptive, mukama2024copula, zouzou2025robust}. Bounding box predictions have limited flexibility and precision compared to mask predictions, which capture a wider range of object shapes. 

\section{METHOD}\label{sec:method}

\subsection{Setting} 

We use conformal prediction to generate adaptive confidence sets for instance segmentation tasks.  We consider the following setting:
\begin{itemize}
    \item The input $X=(I, (z_1, z_2))$ consists of an image $I \in \mathbb{R}^{W \times H \times C}$ 
    and a ``query point'' at pixel coordinate $(z_1, z_2)$.
    \item Each query is located in a ground truth object instance, which can be denoted as a binary mask $Y \in \{0,1\}^{W \times H}$.
    \item We have access to $f$, an instance segmentation model that takes as input $X=(I, (z_1, z_2))$ and a tunable parameter $T$ to output binary mask predictions $\hat{Y} = f(X, T)$. By varying $T$, we can modulate the behavior of $f$ and obtain a diverse set of predictions. Examples of $T$ include mask scores in Segment Anything, logit thresholds before connected-components analysis, and thresholds in watershed segmentation.
    \item To evaluate the quality of a mask prediction, we use the Intersection-over-Union metric,
    \begin{equation*}
        \text{IoU}(Y, \hat{Y}) = \frac{|Y \cap \hat{Y}|}{|Y\cup\hat{Y}|},
    \end{equation*} 
    where a perfect prediction has $\text{IoU}=1$. (For matrices $A,B \in \{0,1\}^{W \times H}$ we use $|A \cap B|$ and $|A \cup B|$ to denote the number of nonzero entries in the pointwise product $A \odot B$ and sum $A+B$, respectively). 
\end{itemize}
This setting arises in applications ranging from farmers selecting their fields in satellite imagery to modern interactive models like Segment Anything, which take point prompts and return object masks. While we describe the method in the point-query setting, it extends to producing prediction sets for an entire image --- either directly for models that support image-wide inference, or by querying all pixels in a brute-force manner. We adopt IoU as the evaluation metric for concreteness, though the method extends readily to other measures of segmentation quality.


Finally, to construct conformal prediction sets we require a calibration dataset $(X_1, Y_1), \dots, (X_n, Y_n)$ consisting of $n$ image-query pairs with their corresponding ground-truth masks. For a new test (or target) example $(X^{\text{test}}, Y^{\text{test}})$ coming from the same distribution, and given a user-specified IoU target $0 < \tau < 1$ and error rate $0 < \alpha < 1$, our conformal algorithm produces a confidence set $C_{\alpha, \tau}(X^{\text{test}})$. This confidence set is designed such that with probability $\ge 1-\alpha$, it contains at least one predicted mask $\hat{y} \in C_{\alpha, \tau}(X^{\text{test}})$ such that $\text{IoU}(Y^{\text{test}}, \hat{y}) > \tau$. For formal theoretical guarantees, we assume the calibration and test samples are IID:

\begin{assumption}\label{assump:IID}   $ (X_1,Y_1), \dots ,(X_n,Y_n) \stackrel{\text{iid}}{\sim} \mathbb{P}$, and independently, $(\xt,\yt) \sim \mathbb{P}$ for all $n \in \mathbb{Z}_+$.
\end{assumption}


\subsection{Conformal Instance Segmentation Algorithm}\label{sec:method-conformal-algorithm}
Our conformal algorithm proceeds as follows (Algorithm \ref{algorithm-conformal}). We begin by selecting a grid of $k$ possible values for the tunable parameter $T$, denoted $\{t_1, t_2, \dots, t_k\}$. Recall that $T$ is any parameter that controls how $f$ produces masks. Crucially, no single value of $T$ yields the best prediction for all queries: at some settings, the model succeeds on certain queries, while other settings perform well on different queries (Figure \ref{fig:predictions-varying-t}). Our conformal algorithm leverages this variability by selecting a subset of $T$ values that, together, provide coverage across the dataset.

\begin{figure}
  \centering
  \includegraphics[width=0.5\textwidth]{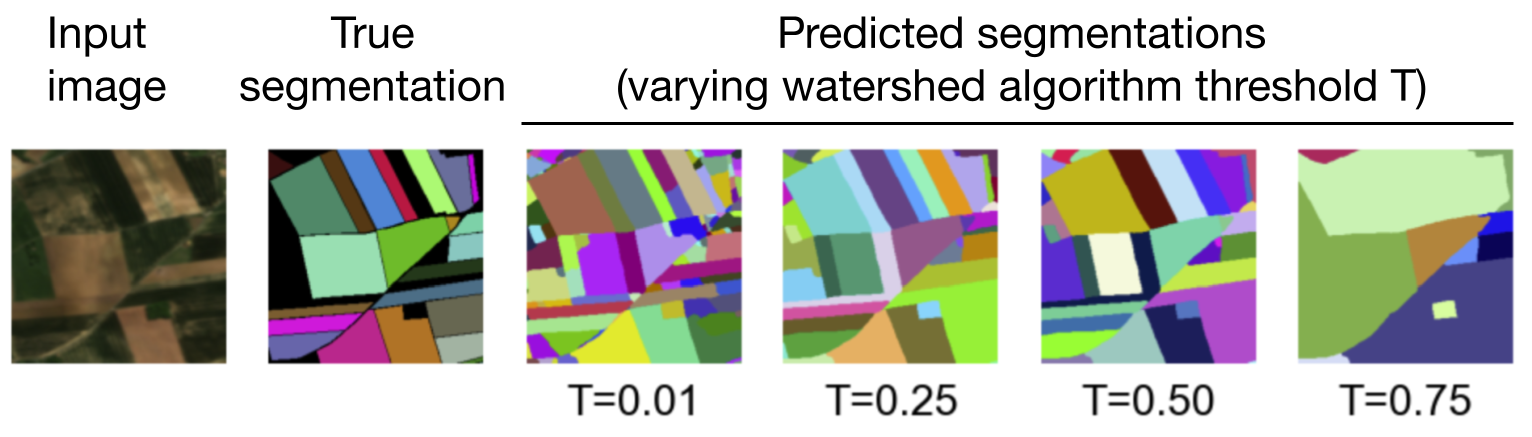}
  \caption{\textbf{Generating diverse predictions by varying tunable parameter $T$.} In the field delineation example, we vary the watershed algorithm threshold $T$ to generate multiple segmentations for each image. While a given threshold may oversegment or undersegment a specific field, another threshold often succeeds, motivating the use of prediction sets that combine them.}\label{fig:predictions-varying-t}
\end{figure}

\begin{algorithm}[!hbt]
\footnotesize
\caption{Conformal prediction set for instance segmentation}\label{algorithm-conformal}
\begin{algorithmic}
\State \textbf{Input:} 
\begin{itemize}
    \item calibration data $(X_1,Y_1),\dots,(X_n, Y_n)$
    \item test input $X^\text{test}$
    \item instance segmentation model $f(X, T)$
    \item parameter values $t_1,\dots,t_k$
    \item error rate $0 < \alpha < 1$
    \item target IoU $0 < \tau < 1$
\end{itemize}
\State \textbf{Output:} 
\begin{itemize}
    \item parameter indices $J_{\alpha, \tau}$ for prediction set
    \item conformal prediction set $C_{\alpha, \tau}(X^\text{test}) = \{ \hat{Y}_j^\text{test} : j \in J_{\alpha, \tau}\}$ 
\end{itemize}
\State
\end{algorithmic}
\begin{algorithmic}[1]
\For{$j=1,2,\ldots,k$}
    \For{$i=1,2,\ldots,n$}
        \State $\hat{Y}_{ij} \gets f(X_i, t_j)$
        \State $\rho_{ij} \gets \text{IoU}(Y_i, \hat{Y}_{ij})$
    \EndFor
    \State $S_j \gets \{i : \rho_{ij} > \tau \}$
\EndFor
\State $J_{\alpha, \tau} \gets \argmin_{J \subseteq [k]} \bigl\{ |J| : \big| \bigcup_{j \in J} S_j \big| \geq (1-\alpha)n \bigr\}$\label{line:find_minimal_set_cover}
\State $C_{\alpha, \tau}(X^\text{test}) \gets \{ f(X^\text{test}, t_j) : j \in J_{\alpha, \tau}\}$
\State \Return $C_{\alpha, \tau}(X^\text{test}), J_{\alpha, \tau}$
\end{algorithmic}
\end{algorithm}

For each calibration example $(X_i, Y_i)$ and each value $t_j$ (for $1 \leq i \leq n$ and $1 \leq j \leq k$), we use the instance segmentation model $f$ to predict a binary mask $\hat{Y}_{ij} = f(X_i, t_j)$
and evaluate its quality using the IoU with the true mask: $\rho_{ij} = \text{IoU}(Y_i, \hat{Y}_{ij})$.
Next, for each $t_j$, we collect the indices of calibration points where the prediction is sufficiently accurate:
\begin{align*}
    S_j = \{i  :  \rho_{ij} > \tau \} = \bigl\{ i : \iou \big(Y_i, f(X_i,t_j) \big) > \tau \bigr\} .
\end{align*}
In other words, $S_j$ is the set of calibration examples where $f(\cdot, t_j)$ achieves IoU above the threshold $\tau$. 

We then identify a minimal subset $J_{\alpha, \tau} \subseteq [k]$ of parameter values such that the union $\bigcup_{j \in J_{\alpha, \tau}} S_j$ covers at least $(1-\alpha) n$ calibration examples. Intuitively, this ensures that the selected parameter values collectively succeed on a large fraction of the calibration set. 

Finally, given a test image $X^\text{test}$, we generate predictions $\hat{Y}_j^\text{test} = f(X^\text{test}, t_j)$ for each $j \in J_{\alpha, \tau}$ and output the conformal prediction set
\begin{align*}
    C_{\alpha, \tau}(X^\text{test}) = \{ \hat{Y}_j^\text{test} : j \in J_{\alpha, \tau}\}.
\end{align*}

\paragraph{Computing a minimal-size set cover} The step of identifying $J_{\alpha, \tau}$ is a variant of the classical Set Cover problem, which is NP-hard \cite{Karp1972}. To make this practical, we use a two-stage approach. 
First, we apply the greedy set cover algorithm \cite{johnson1973approximation, lovasz1975ratio, chvatal1979greedy}, which runs in polynomial time, to obtain an initial cover $J'_{\alpha, \tau} \subseteq [k]$ that covers at least $(1-\alpha) n$ calibration examples. 
We then attempt to improve this solution by brute-forcing over all subsets of $[k]$ of smaller size, ordered by cardinality, until we either find a true minimal cover or confirm that $J'_{\alpha, \tau}$ is already minimal. In practice, this refinement step is feasible because the greedy solution typically yields a small set. (However, if the greedy set cover $J'_{\alpha, \tau}$ is too large for the brute-force refinement step to be computationally tractable, we omit the refinement step.) 

\paragraph{Conformal guarantee} Because the test point is drawn from the same distribution as the calibration data, our procedure inherits a conformal coverage guarantee. In particular, with probability at least $(1-\alpha)$ asymptotically, there is a predicted mask $\hat{y}$ in the confidence set $C_{\alpha, \tau}(X^{\text{test}})$ whose IoU with the true mask $Y^{\text{test}}$ is greater than $\tau$ (see Proposition \ref{prop:NonadaptiveAlgValidity}). These guarantees rely on a feasible choice of $\alpha,\tau$. 

We note that our method differs from standard conformal prediction in that Algorithm \ref{algorithm-conformal} does not use a non-conformity score and the theory does not use exchangeability arguments. However, we still consider our method to fall under the conformal prediction framework because it uses a randomly sampled calibration set to generate statistically valid confidence sets for individual model predictions. 

\paragraph{Not every $\alpha, \tau$ are possible} The quality of the conformal guarantees depends fundamentally on the underlying segmentation model $f$. If more than $\alpha n$ calibration points have no parameter $t_j \in \{t_1, \cdots, t_k\}$ that yields IoU greater than $\tau$, then no subset $J_{\alpha, \tau}$ can cover the required $(1-\alpha)n$ calibration examples. In such cases, Algorithm \ref{algorithm-conformal} Line \ref{line:find_minimal_set_cover} fails to run. To address such issues, a user can either increase the error rate $\alpha$, lower the IoU target $\tau$, or consider a broader collection of segmentation models by considering a superset of the segmentation model parameters $\{t_1,\dots,t_k\}$.


\subsection{Adaptive Prediction Sets via Duplicate Removal}\label{sec:method:remove-duplicates}

For any test input $X^\text{test}$, Algorithm \ref{algorithm-conformal} outputs a confidence set $C_{\alpha, \tau}(X^\text{test})$ whose size is fixed at $|J_{\alpha, \tau}|$. In practice, however, many of the masks $\hat{Y}_j^\text{test}$ for $j \in J_{\alpha, \tau}$ may be identical or highly similar, depending on the underlying model. To avoid redundancy, we post-process $C_{\alpha, \tau}(X^\text{test})$ using Algorithm \ref{algorithm-remove-duplicates} to remove near-duplicate masks. This yields confidence sets whose sizes adapt to the particular input $X^\text{test}$, providing a more informative representation of uncertainty.

\begin{algorithm}[!hbt]
\footnotesize 
\caption{Compute conformal prediction set for instance segmentation, with duplicates removed}\label{algorithm-remove-duplicates}
\begin{algorithmic}
\State \textbf{Input:} 
\begin{itemize}
    \item same inputs as Algorithm \ref{algorithm-conformal}
    \item duplicate prediction IoU threshold $0 < \eta < 1$
\end{itemize}
\State \textbf{Output:} 
\begin{itemize}
    \item conformal prediction set with duplicates removed $C_{\alpha, \tau, \eta}(X^\text{test})$
    \item new IoU threshold $\tilde{\theta}$ for conformal guarantee
\end{itemize}
\State
\end{algorithmic}
\begin{algorithmic}[1]
\Procedure{Unique}{$C$, $\eta$} 
    \For{$C' \subseteq C$ in increasing cardinality}\label{line:UniqueForLoop}
    \If{for all $\hat{Y} \in C \setminus C'$ there exists $\hat{Y}' \in  C'$ \\ \hspace{27pt} such that $\text{IoU}(\hat{Y}, \hat{Y}') > \eta$}
        \State \Return $C'$
    \EndIf
\EndFor
\EndProcedure
\State
\State \textit{// remove duplicates from conformal prediction set}
\State $C_{\alpha, \tau}(X^\text{test}), J_{\alpha, \tau} \gets \Call{Algorithm-1}{}$ \label{line:CallFirstAlg}
\State $C_{\alpha, \tau, \eta}(X^\text{test}) \gets \Call{Unique}{C_{\alpha, \tau}(X^\text{test}), \eta}$
\State
\State \textit{// re-calibrate conformal guarantee IoU threshold}
\For{$i=1,2,\ldots,n$}
    \State $C_{\alpha, \tau}(X_i) \gets \{ f(X_i, t_j) : j \in J_{\alpha, \tau}\}$
    \State $C_{\alpha, \tau, \eta}(X_i) \gets \Call{Unique}{C_{\alpha, \tau}(X_i), \eta}$ \label{line:CallUnique}
    \State $s_i \gets \max_{\hat{Y_i} \in C_{\alpha, \tau, \eta}(X_i)} \text{IoU}(Y_i, \hat{Y}_i)$
\EndFor
\State $\tilde{\theta} \gets \text{Quantile}_{\alpha} \{s_i\}_{i=1}^n$
\State \Return $C_{\alpha, \tau, \eta}(X^\text{test}), \tilde{\theta}$
\end{algorithmic}
\end{algorithm} 

\paragraph{Adaptive prediction set} Formally, let $0 < \eta < 1$ be a user-specified IoU threshold at which two masks are considered duplicates. We define
\begin{equation*}
    C_{\alpha, \tau, \eta}(X^\text{test}) \subseteq C_{\alpha, \tau}(X^\text{test})
\end{equation*}
as a minimal-size subset such that for each mask prediction $\hat{Y} \in C_{\alpha, \tau}(X^\text{test}) \setminus C_{\alpha, \tau, \eta}(X^\text{test})$, there exists $\hat{Y}' \in  C_{\alpha, \tau, \eta}(X^\text{test})$ satisfying $\text{IoU}(\hat{Y}, \hat{Y}') > \eta$. In other words, the new set is a minimal subset such that every discarded mask is within IoU $\eta$ of one that is kept.

Computing this minimal subset is equivalent to the Dominating Set problem in graph theory, which is also NP-hard \cite{garey1979}. In our implementation, we iterate over all possible subsets $C_{\alpha, \tau, \eta}(X^\text{test}) \subseteq C_{\alpha, \tau}(X^\text{test})$ in order of increasing cardinality until we find one that satisfies the above condition. This brute-force search is tractable in our experiments because $|C_{\alpha, \tau}(X^\text{test})| = |J_{\alpha, \tau}|$ is small. For larger set sizes, exhaustive search would quickly become infeasible, and one would need to resort to approximation or heuristic algorithms developed for the Dominating Set problem.

\paragraph{New conformal guarantee} After removing duplicates, the new prediction set $C_{\alpha, \tau, \eta}(X^\text{test})$ will not necessarily satisfy the original conformal guarantee. To recover a valid guarantee, we re-calibrate the IoU threshold. For each calibration point $(X_i, Y_i)$, we construct the adaptive set $C_{\alpha, \tau, \eta}(X_i)$ and record the best IoU achieved within that set:
\begin{align*}
    s_i = \max_{\hat{Y_i} \in C_{\alpha, \tau, \eta}(X_i)} \text{IoU}(Y_i, \hat{Y}_i).
\end{align*}
We then set $\tilde{\theta}$ to be the $\alpha$-quantile of the scores $\{s_i : 1 \leq i \leq n\}$ across the calibration dataset. The resulting guarantee is that, for a new test input $X^\text{test}$, with probability at least $1-\alpha$, there exists some $\hat{Y}^\text{test} \in C_{\alpha, \tau, \eta}(X^\text{test})$ with $\text{IoU}(Y^\text{test}, \hat{Y}^\text{test}) > \tilde{\theta}$. This is stated formally in the following theorem, which is proven in Section \ref{sec:ProofOfMainAlgortihmValidity}.

\begin{theorem}\label{theorem:ValidityMainTheorem}
 Let $C_{\alpha,\tau,\eta}^{(n)}(\xt)$ and $\tilde{\theta}^{(n)}$ denote the set of segmentation masks and $\iou$ threshold returned when running Algorithm \ref{algorithm-remove-duplicates} with calibration samples $\big( (X_i,Y_i) \big)_{i=1}^n$ and parameters $\alpha,\tau,\eta \in (0,1)$. Under Assumptions \ref{assump:IID}, \ref{assump:TieBreakingDeterministic}, and \ref{assump:StrictFeasibilitySmallestSet}, $$\liminf\limits_{n \to \infty} \mathbb{P} \Big(  \max_{\hat{y} \in C_{\alpha,\tau,\eta}^{(n)}(\xt) } \bigl\{ \iou( \yt,\hat{y}) \bigr\} \geq \tilde{\theta}^{(n)} \Big) \geq 1-\alpha.$$
\end{theorem}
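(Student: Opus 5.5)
The plan is to reduce the problem to a fixed, data-independent index set. Once $J$ is fixed, the threshold $\tilde\theta^{(n)}$ is just an empirical $\alpha$-quantile of i.i.d.\ scores, and a Glivenko--Cantelli-type argument gives the bound.

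\textbf{Step 1: a population analogue of line \ref{line:find_minimal_set_cover}.} Write $S_j(x,y)=\mathbbm{1}\{\iou(y,f(x,t_j))>\tau\}$. Let $J^*$ be the population minimal cover, i.e.\ the tie-broken minimizer of $|J|$ subject to $\mathbb{P}\bigl(\max_{j\in J}S_j(X,Y)=1\bigr)\geq 1-\alpha$. I expect Assumptions \ref{assump:TieBreakingDeterministic} and \ref{assump:StrictFeasibilitySmallestSet} to say two things. First, ties among sets of equal size are broken by a deterministic rule. Second, at the minimal cardinality, every candidate set's population coverage is bounded away from $1-\alpha$: strictly above it for feasible sets and strictly below it for infeasible ones, with strict infeasibility also for all smaller sizes.

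There are only $2^k$ subsets. By the strong law of large numbers applied to each empirical coverage $\frac1n\bigl|\bigcup_{j\in J}S_j\bigr|$, together with a union bound, the empirical feasibility pattern eventually matches the population pattern exactly. Hence $\mathbb{P}(J_{\alpha,\tau}^{(n)}=J^*)\to 1$. This step relies on the deterministic tie-breaking so that the same minimizer is selected, and it is the main obstacle: one must handle sets whose coverage equals $1-\alpha$ exactly, which is precisely what strict feasibility excludes.

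\textbf{Step 2: coverage when $J^*$ is fixed.} On the event $J^{(n)}=J^*$, the procedure \textsc{Unique} applied with $J^*$ is a fixed deterministic map. So define the score
\[
s(x,y)=\max_{\hat y\in \textsc{Unique}(\{f(x,t_j):j\in J^*\},\eta)}\iou(y,\hat y).
\]
Then $s_i=s(X_i,Y_i)$ are i.i.d., independent of $s^{\text{test}}=s(\xt,\yt)$. Let $\tilde\theta^*_n$ be their empirical $\alpha$-quantile. Conditioning on the calibration data, $\mathbb{P}(s^{\text{test}}\ge\tilde\theta^*_n\mid \text{cal})=1-F^-(\tilde\theta^*_n)$, where $F^-(u)=\mathbb{P}(s<u)$.

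The definition of the empirical quantile gives $\hat F_n^-(\tilde\theta^*_n)\le\alpha$, where $\hat F_n^-$ is the empirical version of $F^-$; this holds at least up to an $O(1/n)$ rounding term depending on the quantile convention. By the Dvoretzky--Kiefer--Wolfowitz inequality, or Glivenko--Cantelli applied to left limits, $\sup_u|\hat F_n^--F^-|\to0$ almost surely. Bounded convergence then gives
\[
\liminf_n \mathbb{P}\bigl(s^{\text{test}}\ge\tilde\theta^*_n\bigr)\ge1-\alpha.
\]
Alternatively, the exchangeability rank argument gives a finite-sample bound of at least $1-\alpha-1/n$ directly.

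\textbf{Step 3: combining the two steps.} The event in the theorem equals $\{s^{\text{test}}\ge\tilde\theta^*_n\}$ on $\{J^{(n)}=J^*\}$. Therefore
\[
\mathbb{P}(\text{event})\ge \mathbb{P}\bigl(s^{\text{test}}\ge\tilde\theta_n^*\bigr)-\mathbb{P}\bigl(J^{(n)}\neq J^*\bigr).
\]
Taking $\liminf$ and using Steps 1 and 2 yields the claim.
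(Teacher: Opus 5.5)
Your skeleton is the paper's: show $\mathbb{P}(J_{\alpha,\tau}^{(n)}=J_{\alpha,\tau}^*)\to 1$ by a law of large numbers over the finitely many subsets of $[k]$, freeze $J_{\alpha,\tau}^*$ so that $\tilde\theta^{(n)}$ becomes an empirical quantile of i.i.d.\ scores, and combine via $\mathbb{P}(B_n\cap E_n)\ge\mathbb{P}(B_n)-\mathbb{P}(E_n^c)$. Steps 2 and 3 are sound.

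For the frozen-$J^*$ coverage, the paper uses exchangeability of $(\xi_i)_{i=1}^{n+1}$, which is your fallback rank argument and gives a marginal bound for every $n$. Your primary Glivenko--Cantelli/DKW route also works, since $\sup_u|\hat F_n(u^-)-F(u^-)|\le\sup_u|\hat F_n(u)-F(u)|$. It uses more machinery, but it buys slightly more: an asymptotic guarantee conditional on the calibration data. Either way the $O(1/n)$ slack is irrelevant to the $\liminf$.

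Your \emph{eventually} and \emph{almost surely} statements presuppose one i.i.d.\ sequence across $n$, whereas Assumption~\ref{assump:IID} is stated for each $n$ separately. This is harmless, since each probability depends only on the law of the $n$-sample. Arguing in probability, as the paper does with the weak law, avoids the issue entirely.

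The point to fix is Step 1, where you guessed Assumption~\ref{assump:StrictFeasibilitySmallestSet} to be stronger than it is. It only asserts $\mu_\tau(J_{\alpha,\tau}^*)>1-\alpha$. It does not rule out other sets $J$ with $\mu_\tau(J)=1-\alpha$ exactly, either of the same cardinality but later in the tie-breaking order, or of larger cardinality. For such $J$ the empirical feasibility indicator need not converge. So your claim that the empirical feasibility pattern eventually matches the population pattern is not justified under the stated hypotheses.

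You do not need that claim. $J_{\alpha,\tau}^{(n)}$ is the first empirically feasible set in the (cardinality, tie-break) order. It therefore suffices that $J_{\alpha,\tau}^*$ is eventually empirically feasible and every set preceding it is eventually empirically infeasible. A preceding set with $\mu_\tau\ge 1-\alpha$ would lie in $\ca_{\alpha,\tau}$ and precede $J_{\alpha,\tau}^*$, contradicting the definition of $J_{\alpha,\tau}^*$. Hence all preceding sets lie in $\ca_{\alpha,\tau}^c$ and have $\mu_\tau<1-\alpha$ strictly, with no assumption needed; strictness is required only for $J_{\alpha,\tau}^*$ itself.

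This is precisely the paper's argument. The events $\{J_{\alpha,\tau}^{(n)}\in\ca_{\alpha,\tau}\}$ and $\{J_{\alpha,\tau}^*\in\ca_{\alpha,\tau}^{(n)}\}$ together force $J_{\alpha,\tau}^{(n)}=J_{\alpha,\tau}^*$. The first has probability tending to one by Lemma~\ref{lemma:ProbToZero}, and the second by the weak law applied to $J_{\alpha,\tau}^*$ under Assumption~\ref{assump:StrictFeasibilitySmallestSet}. With this repair your proof is complete.
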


Theorem \ref{theorem:ValidityMainTheorem} holds under the IID assumption (Assumption \ref{assump:IID}), and two technical assumptions introduced in Section \ref{sec:AdditionalAssumptions}. These additional assumptions will typically hold, provided that the user makes appropriate implementation and input parameter choices. 

While $\tilde{\theta}$ may differ from $\tau$, we do not expect it to differ substantially as long as the duplicate removal IoU threshold $\eta$ is high, since confidence sets that cover the true mask will often still cover the true mask after removing redundant predictions.

\paragraph{Finite sample guarantees} The above theorem provides asymptotic guarantees. In Appendix \ref{appendix:finite-sample-guarantees}, we present a modified version of our algorithm with finite sample guarantees. We randomly split the calibration data into two sets; we use the first set to rank parameter values $t_1, t_2, \dots t_k$ from highest to lowest priority via a greedy algorithm, then apply Conformal Risk Control to the second set to compute the smallest value $\hat{\lambda} \in [k]$ such that the first $\hat{\lambda}$ parameters in the ranked list result in a valid conformal prediction set. (The sample splitting allows for finite sample guarantees; in contrast, reusing the calibration data between Algorithms \ref{algorithm-conformal} and \ref{algorithm-remove-duplicates} breaks exchangeability and is a major reason we are only able to obtain asymptotic guarantees for Algorithm \ref{algorithm-remove-duplicates}.)

The finite-sample and asymptotic algorithms will generally give similar results because the greedy procedure used in the finite-sample algorithm is closely related to the greedy set cover algorithm used in Line \ref{line:find_minimal_set_cover} of Algorithm \ref{algorithm-conformal}. However, the asymptotic version's prediction sets may be smaller in some cases because in the asymptotic version (1) we can improve the greedy set cover by brute-forcing over all smaller subsets to find the minimal-size set cover, and (2) the duplicate removal procedure brute-forces over all subsets to produce a minimal-size output. In the finite-sample algorithm, because CRC requires a loss function that is non-decreasing in $\lambda$, the range of possible prediction sets is more restricted and we are unable to guarantee minimal-size outputs. We note that the finite-sample algorithm has the advantage of computational efficiency because its greedy parameter ranking procedure and duplicate removal procedure run in polynomial time, making it useful in cases where brute-forcing is not computationally tractable.

In our experiments in the following section, we use the asymptotic version of the algorithm. (For experimental results using the finite-sample version of the algorithm, see Appendix \ref{appendix:finite-sample-experiments}.)

\section{EXPERIMENTS}\label{sec:experiments}

We evaluate our algorithm on three major use cases of instance segmentation where uncertainty quantification is important: agricultural field delineation, cell segmentation, and vehicle detection. For each dataset, the inputs consist of images paired with pixel-wise instance segmentation labels. All images were held out from the training of the segmentation models, i.e., they come from the models' test sets. We further divide these images into two disjoint subsets: a calibration set used to calibrate our conformal procedure, and a test set used to evaluate coverage and accuracy. For each image, we randomly sample pixel coordinates to serve as queries, and vary the tunable parameter $T$ of the segmentation model to generate multiple instance predictions per query. We summarize the datasets, models, tunable parameters, and number of calibration and test pixels in Table \ref{table:datasets}. For detailed descriptions, see Appendix \ref{appendix:datasets}.

\renewcommand{\tabcolsep}{5pt}
\begin{table}[h]
\caption{\textbf{Datasets and models used in our experiments.}}\label{table:datasets}
\begin{center}
\tiny
\begin{tabular}{p{0.7cm}p{1.6cm}p{1.6cm}p{1.5cm}p{0.3cm}p{0.4cm}}
\toprule
EXAMPLE & DATASET & MODEL & TUNABLE \newline PARAMETER(S) & CALIB. \newline PIXELS & TEST \newline PIXELS \\ \midrule 
Field \newline delineation & Fields of The World \newline \cite{kerner2025fields} & FoTW UNet, \newline watershed algorithm & watershed threshold \newline $0 \leq T \leq 1$ & 2476 & 917 \\ \midrule
Cell \newline segmentation & Cellpose \newline \cite{stringer2021cellpose} & Cellpose-SAM \newline \cite{pachitariu2025cellpose} & cell extent threshold \newline $-5 \leq T \leq 5$ & 925 & 659 \\ \midrule
Vehicle \newline detection & Cityscapes \newline \cite{cordts2016cityscapes} & Segment Anything Model (SAM) \newline \cite{kirillov2023segment} & mask index \newline $T_1 \in \{1, 2, 3\}$, probability threshold \newline $0 \leq T_2 \leq 1$ & 105 & 121 \\ \bottomrule
\end{tabular}
\end{center}
\end{table}

Our procedure follows the framework described in Section \ref{sec:method}.
In each experiment, we calibrate our algorithm using the calibration queries, then construct $(1-\alpha)$ confidence sets of instance predictions for each test query. To evaluate our outputs, we visualize the prediction sets, check that they satisfy the $(1-\alpha)$ conformal coverage guarantee, and compare our results with baseline model predictions. 

The choice of error rate $\alpha$ and IoU threshold $\tau$ is task-specific, reflecting the quality of the underlying segmentation model; the values for each experiment are reported in Section \ref{sec:results}.
For duplicate removal, we use $\eta=0.9$ in all experiments; in practice, an end user would specify $\eta$ based on what level of overlap would be considered a duplicate in their application. (See Appendix \ref{appendix:sensitivity-analyses} for sensitivity analyses in which we vary $\alpha$, $\tau$, $\eta$, tunable parameter space size $k$, and calibration set size $n$.)

\subsection{Baselines}
\paragraph{Naive best parameter baseline} We create a ``naive best parameter baseline" that always outputs the single best tunable parameter value, i.e. the parameter value that results in the highest fraction of predictions with IoU $> \tau$ over the calibration set. This baseline represents the best possible coverage we can obtain using a single parameter value. For the field delineation, cell segmentation, and vehicle detection calibration datasets, the single best parameter values are respectively $T=0.24$, $T=0$, and $T=(1, 0.05)$. 
\paragraph{Morphological dilation baseline} We also compare our method to the morphological dilation-based method from \citet{mossina2025conformal}. This method constructs conformal confidence sets for binary segmentation using dilation, i.e., adding a margin to a predicted mask, where the margin size is a fixed number determined using calibration data. This algorithm only guarantees that the true mask is contained within the dilated mask with high probability, and does not guarantee high IoU. We find that the dilation-based method results in low coverage for IoU, undersegmentation, and lack of flexibility in predictions compared to our method. See Appendix \ref{appendix:dilation-baseline} for details on the experimental setup and results.

\subsection{Results}
\label{sec:results}

\begin{figure*}[]
  \centering
  \includegraphics[width=0.9\textwidth]{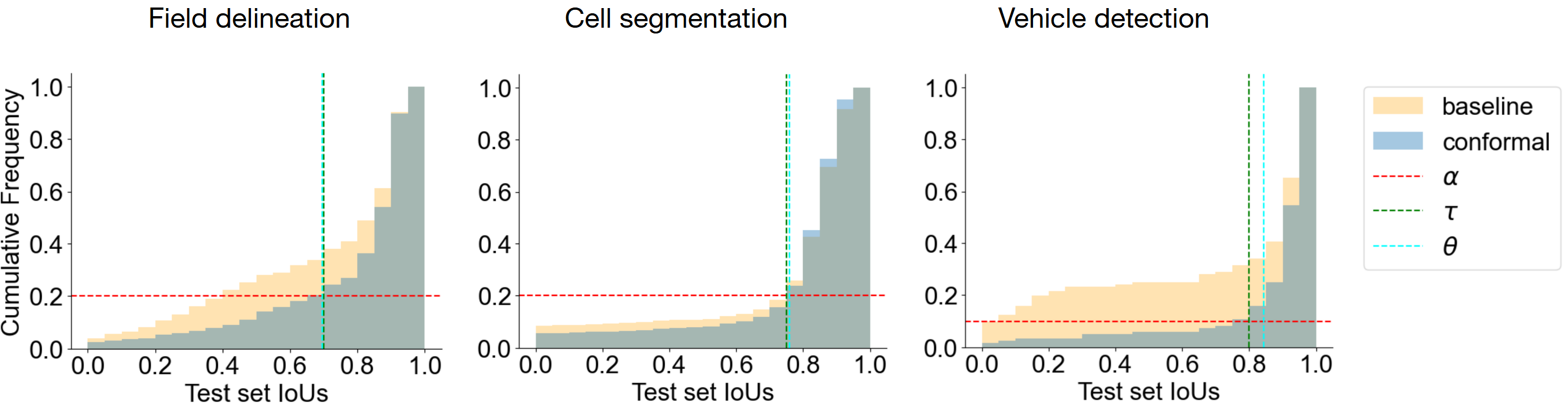}
  \caption{\textbf{Cumulative distributions of IoUs of naive best parameter baseline predictions and our conformal prediction sets over test data points.} For our conformal prediction sets, the maximum IoU over all predictions is used. In all examples, our conformal coverage for $\text{IoU} > \tilde{\theta}$ is close to target coverage $(1-\alpha)$ and greater than baseline coverage. Furthermore, the re-calibrated IoU threshold $\tilde{\theta}$ is close to (or greater than) the original threshold $\tau$, so removing duplicates does not significantly affect the original conformal guarantee. Note that the value of the baseline cumulative frequency at $\tau$ corresponds to the smallest error rate at which the naive best parameter baseline would provide coverage guarantees.}\label{fig:results-iou-histograms}
\end{figure*}

\begin{figure*}[]
  \centering
  \includegraphics[width=0.8\textwidth]{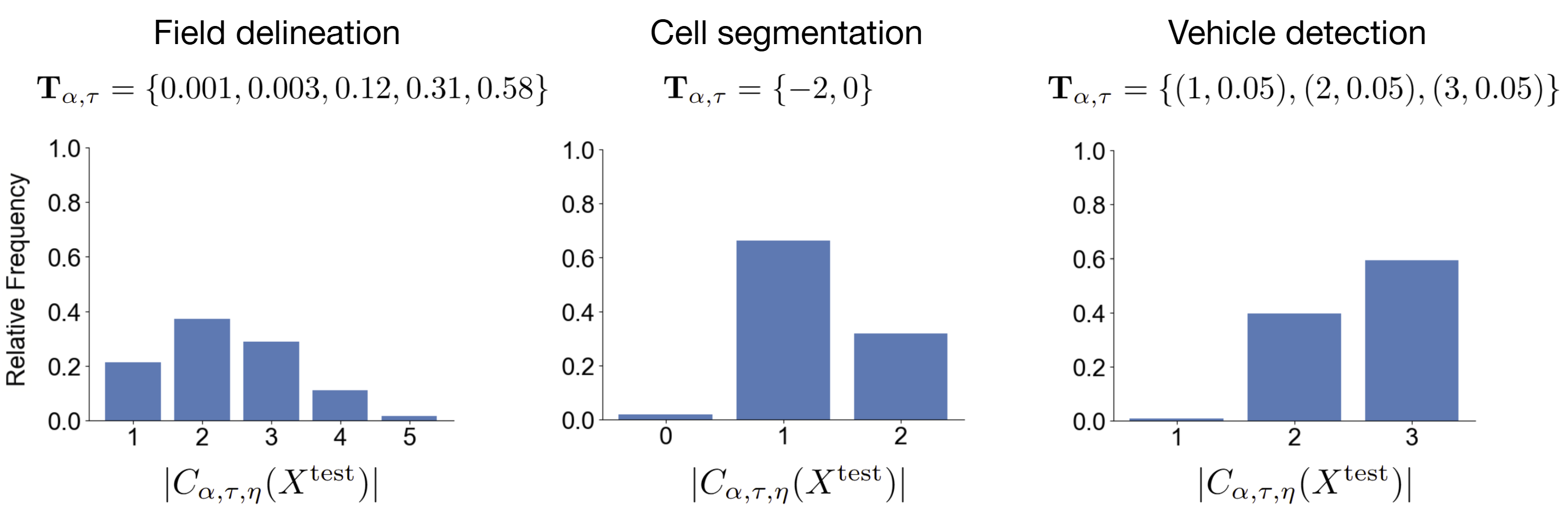}
  \caption{\textbf{Distribution of conformal prediction set sizes for test data points, after removing duplicate predictions.} Our algorithm constructs adaptive confidence sets that vary in size for different inputs. $\mathbf{T}_{\alpha, \tau}$ is the set of values of tunable parameter $T$ used to generate the initial prediction set before removing duplicates.}\label{fig:results-prediction-sets}
\end{figure*}

\begin{figure}[]
  \centering
  \includegraphics[width=0.5\textwidth]{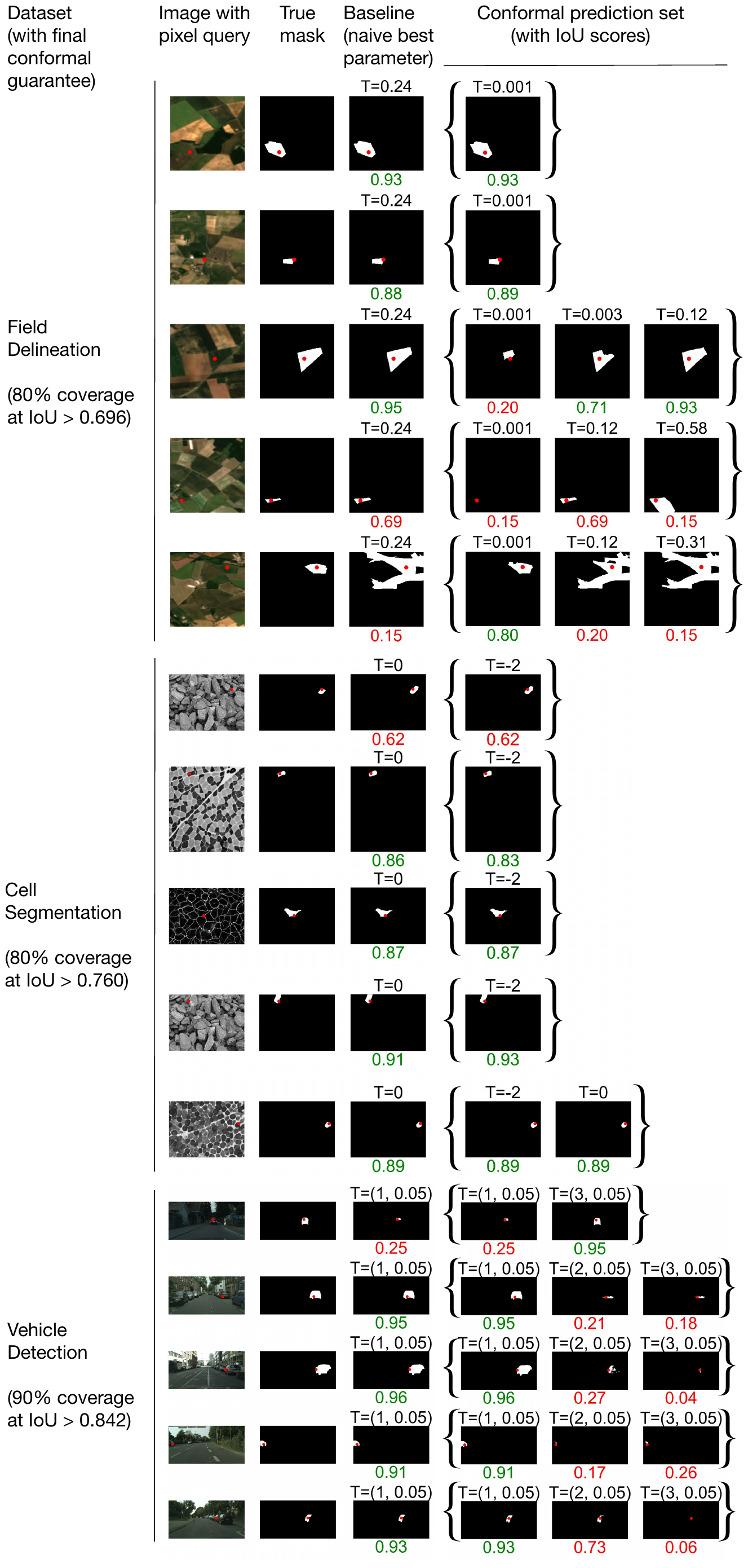}
  \caption{\textbf{Conformal prediction sets for example test queries, compared to naive best parameter baseline predictions.} Our conformal sets often include masks of different sizes or shapes, capturing structural uncertainty. Most contain at least one mask with high IoU (green) with the ground truth, and the number of predictions adapts to query difficulty. In contrast, the naive best parameter baseline only outputs a single prediction and does not guarantee high coverage.}\label{fig:results-images}
\end{figure}

\paragraph{What $\alpha, \tau$ are possible?} 
For each experiment, we minimized the target error rate $\alpha$ while maximizing IoU threshold $\tau$, subject to feasibility. To guide our choices, we plotted histograms of the maximum IoU achievable under any value of the tunable parameter $T$ for each experiment (Figure \ref{fig:calibration-iou-histograms}). This analysis revealed the inherent performance ceilings of the base models. For example, in the field delineation task, it was not possible to construct a 90\% confidence set for $\text{IoU} > 0.8$, because even when taking the best $T$ for each query, $>10\%$ of predictions had $\text{IoU} < 0.8$. (We note that choosing $\alpha$ and $\tau$ by inspecting calibration histograms is a form of double-dipping in the calibration data, albeit a mild one as we do not expect it to have a large effect on the empirical coverage in practice. To avoid this, a user could prespecify $\alpha$ and $\tau$; however, if the preselected $\alpha$ is too small for the preselected IoU threshold $\tau$, the algorithm would return an error message as it is unable to guarantee $(1-\alpha)$ coverage.)

This feasibility analysis defines a Pareto frontier of achievable $(\alpha,\tau)$ pairs: decreasing $\alpha$ (tighter error guarantees) necessarily requires lowering $\tau$, and conversely, raising $\tau$ requires allowing larger $\alpha$.
We use ($\alpha=0.2,\tau=0.7$) for fields, ($\alpha=0.2,\tau=0.75$) for cells, and ($\alpha=0.1,\tau=0.8$) for vehicles, choices that lie on the frontier and reflect the strictest feasible guarantees under the respective models (Table \ref{table:coverage}). 
Field delineation has the weakest guarantee, as it is the most ambiguous task among the three, while SAM vehicle segmentation has the strongest guarantee.
Quantifying these ceilings shows how far current models are from reliability levels needed in practice, and it clarifies that conformal prediction can certify but not overcome the limits of the underlying model.

\paragraph{Conformal prediction set coverage}
Given these feasible $(\alpha,\tau)$ pairs, our conformal algorithm achieves empirical coverage close to the target $1-\alpha$ and substantially higher than naive best parameter baselines (Table~\ref{table:coverage}). Figure~\ref{fig:results-iou-histograms} shows cumulative distributions of IoUs for naive best parameter baseline predictions versus conformal sets, where for each test query we take the maximum IoU among the masks in the set. In field delineation, our conformal sets increase coverage from 66.3\% to 79.7\%, recovering many cases where the baseline either merged multiple fields or split one field incorrectly. In vehicle detection, coverage improves from 66.1\% to 84.3\%, demonstrating that alternative hypotheses often capture cars missed by the top SAM prediction. By contrast, cell segmentation shows only a modest improvement (80.3\% to 83.5\%), reflecting that most errors in this domain are boundary-local. When visualizing example conformal prediction sets, we see that the sets usually contain at least one mask that is very similar to the true mask (Figure \ref{fig:results-images}). In general, using a confidence set of multiple parameters makes it more likely that at least one of the resulting predictions will have high IoU, compared to outputting a single prediction as the naive best parameter baseline does (see Appendix \ref{appendix:naive-best-parameter-baseline} for an illustrative example).

In all three examples, $\tilde{\theta} > \tau$ or $\tilde{\theta} \approx \tau$, so removing duplicate predictions does not significantly affect the original conformal guarantee. (Note that $\tilde{\theta} > \tau$ occurs when $\bigcup_{j \in J_{\alpha, \tau}} S_j$ contains more than $(1-\alpha) n$ calibration indices, so using the original IoU threshold $\tau$ would result in over-coverage.)

\renewcommand{\tabcolsep}{5pt}
\begin{table}[h]
\caption{\textbf{Target, Conformal, and Naive Best Parameter Baseline Coverages of Test Set Predictions.}}\label{table:coverage}
\begin{center}
\tiny
\begin{tabular}{p{1.3cm}p{0.3cm}p{0.3cm}p{1.7cm}p{0.8cm}p{1.8cm}}
\toprule
EXAMPLE & $\tau$ & $\tilde{\theta}$ & TARGET COVERAGE \newline $(1-\alpha)$ & CONFORMAL \newline COVERAGE & BASELINE COVERAGE (NAIVE BEST \newline PARAMETER) \\ \midrule 
Field delineation & 0.7 & 0.696 & 0.8 & 0.797 & 0.663 \\ \midrule
Cell segmentation & 0.75 & 0.760 & 0.8 & 0.835 & 0.803 \\ \midrule
Vehicle detection & 0.8 & 0.842 & 0.9 & 0.843 & 0.661 \\ \bottomrule
\end{tabular}
\end{center}
\end{table}

\paragraph{Adaptive set size}

In addition to achieving calibrated coverage, the experiments illustrate that our method adapts the size of prediction sets to query difficulty. Figure~\ref{fig:results-prediction-sets} shows the distribution of set sizes after duplicate removal: in most cases, sets contain three or fewer masks, but they expand when the query is ambiguous. This adaptivity improves efficiency, ensuring that users are not overwhelmed by redundant masks while still being presented with multiple plausible options in challenging cases. Field delineation has the largest set sizes, going up to 5 potential masks. The ambiguous field boundaries yield masks of significantly different sizes, corresponding to under- and over-segmentation hypotheses (Figure~\ref{fig:results-images}).  In contrast, for cells, most sets collapse to size one or two, again consistent with the observation that errors are primarily boundary-local. (A few sets have size zero because the model classifies the queried pixel as a non-cell.) In vehicle detection, our method often recovers useful predictions beyond the top-1 mask returned by SAM. Together, these results show that conformal sets provide not only calibrated coverage but also an adaptive representation of uncertainty that reflects the difficulty of each query. This adaptivity and diversity of predictions is not available in the naive best parameter baseline, which always outputs a single mask. 

\section{DISCUSSION}
We introduced a conformal prediction algorithm for instance segmentation that constructs adaptive sets of candidate masks with provable coverage guarantees. 
These guarantees formalize the reliability of instance segmentation models by making explicit the probability that at least one mask in the set achieves a desired IoU with the ground truth.
In both theory and practice, these sets attain the desired coverage and adapt their size to query difficulty.

Conformal prediction can only perform as well as the underlying segmentation models allow. In our experiments, empirical coverage is capped by the accuracy of the base models, which qualitatively fall short of human-level performance. This gap reflects opportunities for improvement through better algorithms and training strategies. At the same time, some degree of uncertainty is unavoidable, since the imagery itself can be ambiguous --- e.g., due to low contrast between adjacent fields in satellite images, overlapping cells in microscopy, or blurry vehicles in street scenes. In such cases, even a perfect model should not be expected to commit to a single mask. There is therefore value in prediction diversity, i.e., representing multiple plausible objects when the imagery is ambiguous. 
Current segmentation models are optimized to minimize segmentation loss, which does not explicitly incentivize producing multiple plausible alternatives when the input is ambiguous. We leave the development of training strategies that promote greater mask diversity to future work.


Several limitations remain. First, the conformal guarantee is global: every input receives the same IoU threshold, even though some queries are easy while others are ambiguous. Second, our algorithm requires varying a tunable parameter. In settings where such a parameter is not available or does not induce meaningful diversity, constructing conformal sets may be challenging. Lastly, the set construction steps (set cover, duplicate removal) involve combinatorial optimization. Although feasible in our experiments due to small set sizes, these steps may become computationally expensive in some settings. Developing scalable approximations could expand the practical applicability of the method.


\begin{acknowledgements} 
    This material is based upon work supported by the U.S. Department of Energy, Office of Science, Office of Advanced Scientific Computing Research, Department of Energy Computational Science Graduate Fellowship under Award Number DE-SC0023112. This report was prepared as an account of work sponsored by an agency of the United States Government. Neither the United States Government nor any agency thereof, nor any of their employees, makes any warranty, express or implied, or assumes any legal liability or responsibility for the accuracy, completeness, or usefulness of any information, apparatus, product, or process disclosed, or represents that its use would not infringe privately owned rights. Reference herein to any specific commercial product, process, or service by trade name, trademark, manufacturer, or otherwise does not necessarily constitute or imply its endorsement, recommendation, or favoring by the United States Government or any agency thereof. The views and opinions of authors expressed herein do not necessarily state or reflect those of the United States Government or any agency thereof. 
    
    DMK was partially supported by Generali through its research partnership with the Laboratory for Information and Decision Systems at MIT.
\end{acknowledgements}

\subsubsection*{References}
\printbibliography[heading=none]

\newpage

\onecolumn

\title{Supplementary Material}
\maketitle

\appendix
\renewcommand{\theassumption}{A.\arabic{assumption}}
\counterwithin{figure}{section}

\section{Code and data}
Code and data are available at \href{https://github.com/conformal-instance-segmentation/conformal-instance-segmentation-code}{https://github.com/conformal-instance-segmentation/conformal-instance-segmentation-code} 

\section{License information of datasets}
The Fields of the World dataset \cite{kerner2025fields} for France is under the French Open License (Licence Ouverte 2.0). This means the data must be attributed to Registre Parcellaire Graphique (RPG) and its provider, users must avoid implying endorsement, and users may freely use, modify, and redistribute the data for any purpose.

The Cellpose dataset \cite{stringer2021cellpose} is under the Creative Commons NonCommercial (CC-BY-NC) license. 

The Cityscapes dataset \cite{cordts2016cityscapes} is under a custom license: ``This Cityscapes Dataset is made freely available to academic and non-academic entities for non-commercial purposes such as academic research, teaching, scientific publications, or personal experimentation. Permission is granted to use the data given that you agree to our license terms." \href{https://www.cityscapes-dataset.com/license}{https://www.cityscapes-dataset.com/license}

\section{Proof of Validity of Algorithms}
\subsection{Additional Assumptions Needed}\label{sec:AdditionalAssumptions}

Recall Assumption \ref{assump:IID} that for each $n$, the calibration samples $(X_1,Y_1),\dots, (X_n,Y_n) \stackrel{iid}{\sim} \mathbb{P}$ and independently, $(\xt,\yt) \sim \mathbb{P}$. Throughout this supplement, we let $(X,Y)$ denote an arbitrary random object from the distribution $\mathbb{P}$.

The next assumption ensures that $1-\alpha$ and $\tau$ are small enough to be feasible.

\begin{assumption}\label{assump:FeasibleThresh}
    $\mathbb{P} \Big( \max_{j \in [k]} \bigl\{ \iou \big(Y,f(X,t_j) \big) \bigr\} > \tau \Big) > 1-\alpha.$
\end{assumption}

Assumption \ref{assump:FeasibleThresh} can be checked empirically, and assuming the calibration sample size $n$ is relatively large, Algorithm \ref{algorithm-conformal} will return an error when it does not hold. If Assumption \ref{assump:FeasibleThresh} does not hold, the user can either increase the size of the parameter search space $\{t_1,\dots,t_k\}$, can increase $\alpha$, or can decrease $\tau$. They can also consider some combination of these options. Algorithm \ref{algorithm-conformal} can subsequently be applied after the parameters are changed such that it will not return an error.

The next assumption is an implementation criterion that the user can enforce. We suspect that the assumption will also hold in most default implementations of Algorithms \ref{algorithm-conformal} and \ref{algorithm-remove-duplicates}. 

\begin{assumption}\label{assump:TieBreakingDeterministic}
    In Algorithms \ref{algorithm-conformal} and \ref{algorithm-remove-duplicates}, whenever a search across sets of the same cardinality is conducted, the ordering of sets with the same cardinality is fixed. In particular, in
    Line \ref{line:find_minimal_set_cover} of Algorithm \ref{algorithm-conformal}, for each $l \in [k]$, there is a predetermined ordering of all subsets $J \subseteq [k]$ with $\vert J \vert=l$. Thus in cases where the $\argmin$ in Line \ref{line:find_minimal_set_cover} of Algorithm \ref{algorithm-conformal} is equal to a collection $\{J_1,\dots,J_r\}$ of distinct subsets of $[k]$ with cardinality $l$, $J_{\alpha,\tau}$ will be set to the first element of $\{J_1,\dots,J_r\}$ in the predetermined ordering. Moreover, in Line \ref{line:UniqueForLoop} of Algorithm \ref{algorithm-remove-duplicates}, the for loop always uses the same ordering to search through sets with the same cardinality.
\end{assumption}

We introduce one more technical assumption that ensures that asymptotically, $J_{\alpha,\tau}$ returned by Algorithm \ref{algorithm-conformal} will equal $J_{\alpha,\tau}^*$ with probability converging to 1. First we must introduce some additional notation. For each $J \subseteq [k]$ define \begin{equation}\label{eq:MuJDef}
    \mu_{\tau}(J) \equiv \mathbb{P}\Big( \max_{j \in J} \bigl\{ \iou \big(Y,f(X,t_j) \big) \bigr\} > \tau \Big).
\end{equation} Further define $$\ca_{\alpha,\tau} \equiv \bigl\{ J \subseteq [k] \ : \  \mu_{\tau}(J)  \geq 1-\alpha \bigr\},$$ and observe that, by Assumption \ref{assump:FeasibleThresh}, $\ca_{\alpha,\tau}$ is nonempty. Now let $J_{\alpha,\tau}^*$ be an element of $\argmin_{J \subseteq \ca_{\alpha,\tau}} \{ \vert J \vert \}$. Under Assumption \ref{assump:TieBreakingDeterministic}, we can further define $ J_{\alpha,\tau}^*$ to be the element of $\ca_{\alpha,\tau}$ that is first among all elements in $\ca_{\alpha,\tau}$ with cardinality $\vert J_{\alpha,\tau}^* \vert$ according to the ordering used in Line \ref{line:find_minimal_set_cover} of Algorithm \ref{algorithm-conformal}. While by definition, $\mu_{\tau}(J_{\alpha,\tau}^*) \geq 1-\alpha$, the next assumption states that this inequality is strict, which should generally hold (for example, if there were a continuous prior on all $\mu_{\tau}(J)$ values, this next assumption would hold with probability 1 under Assumption \ref{assump:FeasibleThresh}).

\begin{assumption}\label{assump:StrictFeasibilitySmallestSet} $\ca_{\alpha,\tau}$ is nonempty and $\mu_{\tau}(J_{\alpha,\tau}^*) > 1-\alpha$.
\end{assumption}

\subsection{Helpful Lemma}

To prove the asymptotic validity of Algorithms \ref{algorithm-conformal} and \ref{algorithm-remove-duplicates}, it helps to introduce a lemma establishing the asymptotic behavior of the set $J_{\alpha,\tau}$ returned in Algorithm \ref{algorithm-conformal}. To this it is convenient to define, $$\ca_{\alpha,\tau}^c \equiv \bigl\{ J \subseteq [k] \ : \  \mu_{\tau}(J)  < 1-\alpha \bigr\}.$$

\begin{lemma}\label{lemma:ProbToZero}
    Let $J_{\alpha,\tau}^{(n)}$ denote the set of indices returned when running Algorithm \ref{algorithm-conformal} with calibration samples $\big( (X_i,Y_i) \big)_{i=1}^n$ and parameters $\alpha,\tau \in (0,1)$, and suppose $J_{\alpha,\tau}^{(n)}=\textnormal{NA}$ if Algorithm \ref{algorithm-conformal} returns an error. Then under Assumptions \ref{assump:IID} and \ref{assump:FeasibleThresh}, $\lim_{n \to \infty} \mathbb{P} (J_{\alpha,\tau}^{(n)} \in \ca_{\alpha,\tau}^c)=0$ and $\lim_{n \to \infty} \mathbb{P} (J_{\alpha,\tau}^{(n)} =\textnormal{NA})=0$.
\end{lemma}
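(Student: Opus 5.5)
Both limits follow from the weak law of large numbers applied to finitely many empirical coverage frequencies, plus a union bound over the $2^k$ subsets of $[k]$. For each $J \subseteq [k]$ define the empirical coverage
\begin{equation*}
\hat{\mu}_{\tau}^{(n)}(J) \equiv \frac{1}{n}\sum_{i=1}^n \mathbbm{1}\Big\{ \max_{j \in J} \iou\big(Y_i,f(X_i,t_j)\big) > \tau \Big\} = \frac{1}{n}\Big\vert \bigcup_{j \in J} S_j \Big\vert .
\end{equation*}
Line \ref{line:find_minimal_set_cover} of Algorithm \ref{algorithm-conformal} returns a set $J$ only if $\hat{\mu}_{\tau}^{(n)}(J) \geq 1-\alpha$. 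It returns NA exactly when no $J$ satisfies this, which is equivalent to $\hat{\mu}_{\tau}^{(n)}([k]) < 1-\alpha$, since the union over $[k]$ is the largest. Under Assumption \ref{assump:IID}, each $\hat{\mu}_{\tau}^{(n)}(J)$ is an average of $n$ iid Bernoulli$(\mu_\tau(J))$ variables, with $\mu_\tau(J)$ as in \eqref{eq:MuJDef}.

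\textbf{NA claim.} By Assumption \ref{assump:FeasibleThresh}, $\mu_\tau([k]) > 1-\alpha$; set $\delta_0 \equiv \mu_\tau([k]) - (1-\alpha) > 0$. Then
\begin{equation*}
\mathbb{P}\big(J^{(n)}_{\alpha,\tau} = \textnormal{NA}\big) = \mathbb{P}\big(\hat{\mu}_{\tau}^{(n)}([k]) < 1-\alpha\big) \leq \mathbb{P}\big(\vert \hat{\mu}_{\tau}^{(n)}([k]) - \mu_\tau([k]) \vert \geq \delta_0\big).
\end{equation*}
The right-hand side tends to $0$ by Chebyshev's or Hoeffding's inequality; Hoeffding gives the explicit bound $2e^{-2n\delta_0^2}$.

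\textbf{Infeasible-set claim.} Since $[k]$ is finite, $\ca_{\alpha,\tau}^c$ is a finite collection. For each $J \in \ca_{\alpha,\tau}^c$ set $\delta_J \equiv (1-\alpha) - \mu_\tau(J) > 0$, and let $\delta \equiv \min_{J \in \ca^c_{\alpha,\tau}} \delta_J$, which is strictly positive because the minimum is over finitely many positive numbers. If the algorithm returns $J \in \ca^c_{\alpha,\tau}$, then $\hat{\mu}_\tau^{(n)}(J) \geq 1-\alpha$, so $\hat{\mu}_\tau^{(n)}(J) - \mu_\tau(J) \geq \delta$. A union bound then gives
\begin{equation*}
\mathbb{P}\big(J^{(n)}_{\alpha,\tau} \in \ca^c_{\alpha,\tau}\big) \leq \sum_{J \in \ca^c_{\alpha,\tau}} \mathbb{P}\big(\hat{\mu}_\tau^{(n)}(J) - \mu_\tau(J) \geq \delta\big) \leq 2^k e^{-2n\delta^2} \to 0.
\end{equation*}
If $\ca^c_{\alpha,\tau}$ is empty, this probability is trivially zero.

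The only subtle step is the one that relies on the strict inequality in the definition of $\ca_{\alpha,\tau}^c$. Because every infeasible $J$ has $\mu_\tau(J) < 1-\alpha$ strictly, and there are only finitely many such $J$, we get a uniform gap $\delta > 0$. Without a strict gap, a set with $\mu_\tau(J) = 1-\alpha$ could be selected with non-vanishing probability. Such sets belong to $\ca_{\alpha,\tau}$, however, so they do not affect this lemma; handling them is deferred to Assumption \ref{assump:StrictFeasibilitySmallestSet} in later results. Note also that the argument never uses the minimality or tie-breaking in Line \ref{line:find_minimal_set_cover}: every set the algorithm could return satisfies the empirical feasibility constraint, so the bound holds whichever feasible set is selected.
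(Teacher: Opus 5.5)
Your proof is correct and follows the paper's argument. The paper likewise bounds $\mathbb{P}(J^{(n)}_{\alpha,\tau}=J)$ for each fixed $J \in \ca_{\alpha,\tau}^c$ by the probability that the empirical coverage of $J$ reaches $1-\alpha$, applies the weak law of large numbers and a union bound over the finitely many subsets, and handles the NA case through the equivalence with the empirical coverage of $[k]$ falling below $1-\alpha$. Your Hoeffding bound with a uniform gap $\delta$ is a quantitative version of the same step, giving an explicit rate $2^k e^{-2n\delta^2}$ where the paper has only the qualitative limit.
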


\begin{proof}
    Fix $\alpha, \tau \in (0,1)$ to be the tolerance and $\iou$ thresholds used in Algorithm \ref{algorithm-conformal}. Let $\cx =\mathbb{R}^{W \times H \times C} \times [W] \times [H]$ and $\cy =\{0,1\}^{W \times H}$ denote the input and output space for $f(\cdot,t)$, respectively. Now for each $J \subseteq [k]$, define $g_J : \cx \times \cy \to \{0,1\}$ by $$g_J(x,y) \equiv \begin{cases}
    1 & \text{if } \iou \big(y , f(x,t_j) \big) > \tau \text{ for some } j \in J, 
    \\ 0 & \text{otherwise.} \end{cases}$$ Observe that for all $J \subseteq [k]$, $$\e[g_J(X,Y)]= \mathbb{P} \Big( \max_{j \in J} \bigl\{ \iou \big(Y,f(X,t_j) \big) \bigr\} > \tau \Big) = \mu_{\tau}(J),$$ where the last step follows by definition of $\mu_{\tau}(J)$ in Equation \eqref{eq:MuJDef}.

     Now fix $J \in \ca_{\alpha,\tau}^c$, and we will show that  $\lim_{n \to \infty} \mathbb{P}(J_{\alpha,\tau}^{(n)} =J)=0$. Since, $J \in \ca_{\alpha,\tau}^c$, $ \mathbb{E}[g_J(X,Y)] = \mu_{\tau}(J) < 1-\alpha$. Further by by Assumption \ref{assump:IID}, $g_J(X_i,Y_i) \in \{0,1\}$ are IID for $i=1,\dots,n$, so by the weak law of large numbers, \begin{equation}\label{eq:ConvergenceInProbJinAcomp}
\frac{1}{n} \sum_{i=1}^n g_J(X_i,Y_i) \xrightarrow{p} \mu_{\tau}(J)  < 1 - \alpha \quad \text{as} \quad n \to \infty. \end{equation}  Further observe that by Algorithm \ref{algorithm-conformal}, if the returned set of indices $J_{\alpha,\tau}^{(n)}$ equals $J$, it must be the case that $\sum_{i=1}^n g_J(X_i,Y_i) \geq (1-\alpha)n$. Hence $$\mathbb{P}(J_{\alpha,\tau}^{(n)} =J) \leq \mathbb{P} \Big( \sum_{i=1}^n g_J(X_i,Y_i) \geq (1-\alpha)n \Big)  \leq \mathbb{P} \Big( \Big| \frac{1}{n} \sum_{i=1}^n g_J(X_i,Y_i) - \mu_{\tau}(J) \Big| \geq (1-\alpha) -\mu_{\tau}(J) \Big).$$

Taking the limit as $n \to \infty$ of each side, by \eqref{eq:ConvergenceInProbJinAcomp} and the definition of convergence in probability, $$\limsup\limits_{n \to \infty} \mathbb{P}(J_{\alpha,\tau}^{(n)} =J) \leq  \limsup\limits_{n \to \infty} \mathbb{P} \Big( \Big| \frac{1}{n} \sum_{i=1}^n g_J(X_i,Y_i) - \mu_{\tau}(J) \Big| \geq (1-\alpha) -\mu_{\tau}(J) \Big) =0.$$ Hence we have shown that $\limsup_{n \to \infty} \mathbb{P}(J_{\alpha,\tau}^{(n)} =J)=0$, and this argument holds for any fixed $J \in \ca_{\alpha,\tau}^c$. 

By the union bound and applying this result, $$0 \leq \limsup_{n \to \infty} \mathbb{P}( J_{\alpha,\tau}^{(n)} \in \ca_{\alpha,\tau}^c ) = \limsup_{n \to \infty} \mathbb{P}\Big( \bigcup\limits_{J \in \ca_{\alpha,\tau}^c} \{ J_{\alpha,\tau}^{(n)} =J \} \Big) \leq \limsup_{n \to \infty} \sum_{J \in \ca_{\alpha,\tau}^c} \mathbb{P}(J_{\alpha,\tau}^{(n)} =J) = 0.$$ The above result further implies that $\lim_{n \to \infty} \mathbb{P}(J_{\alpha,\tau}^{(n)} \in \ca_{\alpha,\tau}^c)=0$.

To complete the proof we must show that $\lim_{n \to \infty} \mathbb{P} (J_{\alpha,\tau}^{(n)} =\textnormal{NA})=0$. By Assumption \ref{assump:FeasibleThresh}, $$ \mathbb{E}[g_{[k]}(X,Y)] =\mu_{\tau}([k])  = \mathbb{P} \Big( \max_{j \in [k]} \bigl\{ \iou \big(Y,f(X,t_j) \big) \bigr\} > \tau \Big) > 1-\alpha.$$ Since by Assumption \ref{assump:IID}, $g_{[k]}(X_i,Y_i) \in \{0,1\}$ are IID for $i=1,\dots,n$, by the weak law of large numbers \begin{equation}\label{eq:ConvergenceInProbgAll}
   \frac{1}{n}\sum_{i=1}^n g_{[k]}(X_i,Y_i) \xrightarrow{p} \mu_{\tau}([k]) > 1-\alpha \quad \text{as} \quad n \to \infty. \end{equation} Now for each $n \in \mathbb{Z}_+$, note that $J_{\alpha,\tau}^{(n)} =\textnormal{NA}$ if and only if Algorithm \ref{algorithm-conformal} returns an error in Line \ref{line:find_minimal_set_cover}. Furthermore Algorithm \ref{algorithm-conformal} returns an error in Line \ref{line:find_minimal_set_cover} if and only if $\sum_{i=1}^n g_{[k]}(X_i,Y_i) < (1-\alpha) n$.
   Thus $$\mathbb{P}(J_{\alpha,\tau}^{(n)} =\textnormal{NA}) = \mathbb{P} \Big( \sum_{i=1}^n g_{[k]}(X_i,Y_i) < (1-\alpha)n \Big) \leq \mathbb{P} \Big( \Big| \frac{1}{n}\sum_{i=1}^n g_{[k]}(X_i,Y_i)- \mu_{\tau}([k]) \Big| > \mu_{\tau}([k]) -(1-\alpha) \Big).$$ 
Taking the limit as $n \to \infty$ of each side, by \eqref{eq:ConvergenceInProbgAll} and the definition of convergence in probability, $$\limsup\limits_{n \to \infty} \mathbb{P}(J_{\alpha,\tau}^{(n)} =\text{NA}) \leq  \limsup\limits_{n \to \infty} \mathbb{P} \Big( \Big| \frac{1}{n} \sum_{i=1}^n g_{[k]}(X_i,Y_i) - \mu_{\tau}([k]) \Big| \geq (1-\alpha) -\mu_{\tau}(J) \Big) =0.$$ By nonnegativity of probabilities, $\lim_{n \to \infty} \mathbb{P} (J_{\alpha,\tau}^{(n)} =\textnormal{NA})=0$.
\end{proof}

\subsection{Validity of Algorithm \ref{algorithm-conformal}}

The next proposition formally states the asymptotic validity of Algorithm \ref{algorithm-conformal}.

\begin{proposition}\label{prop:NonadaptiveAlgValidity} Let $C_{\alpha,\tau}^{(n)}(\xt)$ denote the set of segmentation masks returned when running Algorithm \ref{algorithm-conformal} with calibration samples $\big( (X_i,Y_i) \big)_{i=1}^n$ and parameters $\alpha,\tau \in (0,1)$. Under Assumptions \ref{assump:IID} and \ref{assump:FeasibleThresh}, $$\liminf\limits_{n \to \infty} \mathbb{P} \Big(  \max_{\hat{y} \in C_{\alpha,\tau}^{(n)}(\xt) } \bigl\{ \iou( \yt,\hat{y}) \bigr\} > \tau \Big) \geq 1-\alpha.$$
\end{proposition}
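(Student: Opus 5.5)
The plan is to condition on the index set returned by Algorithm \ref{algorithm-conformal} and exploit the independence of the test point from the calibration data. Lemma \ref{lemma:ProbToZero} then forces the returned set to lie, with probability tending to one, in $\ca_{\alpha,\tau}$, where the population coverage is at least $1-\alpha$. Write $J^{(n)} = J_{\alpha,\tau}^{(n)}$. It is a deterministic function of the calibration sample $((X_i,Y_i))_{i=1}^n$ and takes values in the finite set $2^{[k]} \cup \{\textnormal{NA}\}$. On the event $J^{(n)}=J$ with $J \neq \textnormal{NA}$, we have $C_{\alpha,\tau}^{(n)}(\xt)=\{f(\xt,t_j): j \in J\}$. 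The coverage event is therefore exactly $\{g_J(\xt,\yt)=1\}$, with $g_J$ as defined in the proof of Lemma \ref{lemma:ProbToZero}. When the algorithm errors (NA) I treat the event as failing, which is the conservative convention: either the set is empty so the max is $-\infty$, or we simply lower-bound its probability by $0$.

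The key step is the decomposition
\begin{equation*}
\mathbb{P}\Big(\max_{\hat y \in C_{\alpha,\tau}^{(n)}(\xt)} \iou(\yt,\hat y) > \tau\Big) \geq \sum_{J \in \ca_{\alpha,\tau}} \mathbb{P}\big(J^{(n)}=J,\ g_J(\xt,\yt)=1\big).
\end{equation*}
By Assumption \ref{assump:IID}, $(\xt,\yt)$ is independent of the calibration sample and hence of $J^{(n)}$. Each summand therefore factors as $\mathbb{P}(J^{(n)}=J)\,\mu_\tau(J)$. For $J \in \ca_{\alpha,\tau}$ we have $\mu_\tau(J)\geq 1-\alpha$, so the right-hand side is at least $(1-\alpha)\,\mathbb{P}(J^{(n)} \in \ca_{\alpha,\tau})$.

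Finally, since $2^{[k]}$ is partitioned into $\ca_{\alpha,\tau}$ and $\ca^c_{\alpha,\tau}$,
\begin{equation*}
\mathbb{P}(J^{(n)} \in \ca_{\alpha,\tau}) = 1 - \mathbb{P}(J^{(n)} \in \ca^c_{\alpha,\tau}) - \mathbb{P}(J^{(n)}=\textnormal{NA}).
\end{equation*}
By Lemma \ref{lemma:ProbToZero}, whose hypotheses are exactly Assumptions \ref{assump:IID} and \ref{assump:FeasibleThresh}, both subtracted terms tend to $0$. Taking $\liminf$ yields the bound $1-\alpha$.

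The only delicate point is the independence and factorization step. One has to note that $J^{(n)}$ is measurable with respect to the calibration data alone, and that the union over the finitely many $J$ is a disjoint decomposition. Beyond that, no strict-inequality assumption is needed here, unlike Theorem \ref{theorem:ValidityMainTheorem}, because $\mu_\tau(J)\geq 1-\alpha$ suffices.
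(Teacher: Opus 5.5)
Your proposal is correct and is essentially the paper's proof. Both restrict to the event $J_{\alpha,\tau}^{(n)} \in \ca_{\alpha,\tau}$, factor each summand using the independence of $(\xt,\yt)$ from the calibration sample to get the lower bound $(1-\alpha)\,\mathbb{P}(J_{\alpha,\tau}^{(n)} \in \ca_{\alpha,\tau})$, and use Lemma \ref{lemma:ProbToZero} to send that probability to one. Your use of joint rather than conditional probabilities in the factorization is a minor tidying that avoids conditioning on possibly null events.
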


\begin{proof}

Fix $\alpha, \tau \in (0,1)$ to be the tolerance and $\iou$ thresholds used in Algorithm \ref{algorithm-conformal}. Let $J_{\alpha,\tau}^{(n)}$ denote the set of indices returned when running Algorithm \ref{algorithm-conformal} with calibration samples $\big( (X_i,Y_i) \big)_{i=1}^n$ and parameters $\alpha,\tau \in (0,1)$, and let $J_{\alpha,\tau}^{(n)}=\textnormal{NA}$ if Algorithm \ref{algorithm-conformal} returns an error. Because either, $J_{\alpha,\tau}^{(n)} \in \ca_{\alpha,\tau}$ and $J_{\alpha,\tau}^{(n)} \in \ca_{\alpha,\tau}^c$ or $J_{\alpha,\tau}^{(n)}=\textnormal{NA}$ with each of these events being mutually exclusive, $$\mathbb{P}(J_{\alpha,\tau}^{(n)} \in \ca_{\alpha,\tau})=1-\mathbb{P}(J_{\alpha,\tau}^{(n)} \in \ca_{\alpha,\tau}^c)- \mathbb{P}(J_{\alpha,\tau}^{(n)} =\text{NA}).$$ By Lemma \ref{lemma:ProbToZero}, $\lim_{n \to \infty} \mathbb{P}(J_{\alpha,\tau}^{(n)}=\textnormal{NA})=0$ and $\lim_{n \to \infty} \mathbb{P}(J_{\alpha,\tau}^{(n)} \in \ca_{\alpha,\tau}^c)=0$, so taking the limit as $n \to \infty$ of each side of the above expression yields that $\lim_{n \to \infty} \mathbb{P}(J_{\alpha,\tau}^{(n)} \in \ca_{\alpha,\tau})=1$.

In Algroithm \ref{algorithm-conformal}, provided that $J_{\alpha,\tau}^{(n)} \neq \textnormal{NA}$, $C_{\alpha, \tau}^{(n)}(\xt)= \{ f(\xt, t_j) \ : \ j \in J_{\alpha,\tau}^{(n)} \}$. Hence, $$\begin{aligned} \mathbb{P} \Big(  \max_{\hat{y} \in C_{\alpha,\tau}^{(n)}(\xt) } \bigl\{ \iou( \yt,\hat{y}) \bigr\} > \tau \Big)
& \geq \mathbb{P} \Big( \Bigl\{ \max_{\hat{y} \in C_{\alpha,\tau}^{(n)}(\xt) } \bigl\{ \iou( \yt,\hat{y}) \bigr\} > \tau \Bigr\} \cap \{ J_{\alpha,\tau}^{(n)} \in \ca_{\alpha,\tau} \} \Big)
\\ & = \mathbb{P} \Big( \Bigl\{ \max_{j \in J_{\alpha,\tau}^{(n)} } \bigl\{ \iou \big( \yt, f(\xt, t_j) \big) \bigr\} > \tau \Bigr\} \cap \{ J_{\alpha,\tau}^{(n)} \in \ca_{\alpha,\tau} \} \Big)
\\ & =  \sum_{J \in \ca_{\alpha,\tau} } \mathbb{P}(J_{\alpha,\tau}^{(n)}=J) \mathbb{P} \Big( \max_{j \in J_{\alpha,\tau}^{(n)} } \bigl\{ \iou \big( \yt, f(\xt, t_j) \big) \bigr\} > \tau \Big|  J_{\alpha,\tau}^{(n)}=J \Big)
\\ & =  \sum_{J \in \ca_{\alpha,\tau} } \mathbb{P}(J_{\alpha,\tau}^{(n)}=J) \mathbb{P} \Big( \max_{j \in J } \bigl\{ \iou \big( \yt, f(\xt, t_j) \big) \bigr\} > \tau \Big|  J_{\alpha,\tau}^{(n)}=J \Big)
\\ & =  \sum_{J \in \ca_{\alpha,\tau} } \mathbb{P}(J_{\alpha,\tau}^{(n)}=J) \mathbb{P} \Big( \max_{j \in J } \bigl\{ \iou \big( \yt, f(\xt, t_j) \big) \bigr\} > \tau \Big)
\\ & = \sum_{J \in \ca_{\alpha,\tau} } \mathbb{P}(J_{\alpha,\tau}^{(n)}=J) \mu_{\tau}(J)
\\ & \geq (1-\alpha) \sum_{J \in \ca_{\alpha,\tau} } \mathbb{P}(J_{\alpha,\tau}^{(n)}=J)
\\ & = (1-\alpha) \mathbb{P}( J_{\alpha,\tau}^{(n)} \in \ca_{\alpha,\tau}).
\end{aligned}$$ Above the third step follows from the law of total probability, the fifth step holds because $J_{\alpha,\tau}^{(n)}$ is independent of $(\xt,\yt)$ by Assumption \ref{assump:IID} (with the former being a function of the calibration sample $\big((X_i,Y_i)\big)_{i=1}^n$), and the sixth and seventh steps following from the definition of $\mu_{\tau}(J)$ and $\ca_{\alpha,\tau}$ respectively. Recalling that $\lim_{n \to \infty} \mathbb{P}(J_{\alpha,\tau}^{(n)} \in \ca_{\alpha,\tau})=1$, we can take the $\liminf$ as $n \to \infty$ of each side of the above expression to get that $$\liminf_{n \to \infty} \mathbb{P} \Big(  \max_{\hat{y} \in C_{\alpha,\tau}^{(n)}(\xt) } \bigl\{ \iou( \yt,\hat{y}) \bigr\} > \tau \Big) \geq (1-\alpha) \cdot \liminf_{n \to \infty} \mathbb{P}( J_{\alpha,\tau}^{(n)} \in \ca_{\alpha,\tau}) =(1-\alpha).$$

\end{proof}

\subsection{Proof of Theorem \ref{theorem:ValidityMainTheorem}}\label{sec:ProofOfMainAlgortihmValidity}

For convenience we restate Theorem \ref{theorem:ValidityMainTheorem} from the main text below. Subsequently, a proof is provided.

 \textit{Let $C_{\alpha,\tau,\eta}^{(n)}(\xt)$ and $\tilde{\theta}^{(n)}$ denote the set of segmentation masks and $\iou$ threshold returned when running Algorithm \ref{algorithm-remove-duplicates} with calibration samples $\big( (X_i,Y_i) \big)_{i=1}^n$ and parameters $\alpha,\tau,\eta \in (0,1)$. Under Assumptions \ref{assump:IID}, \ref{assump:TieBreakingDeterministic}, and \ref{assump:StrictFeasibilitySmallestSet}, $$\liminf\limits_{n \to \infty} \mathbb{P} \Big(  \max_{\hat{y} \in C_{\alpha,\tau,\eta}^{(n)}(\xt) } \bigl\{ \iou( \yt,\hat{y}) \bigr\} \geq \tilde{\theta}^{(n)} \Big) \geq 1-\alpha.$$}

\begin{proof}
Fix $\alpha, \tau,\eta \in (0,1)$ to be the tolerance and $\iou$ thresholds used in Algorithm \ref{algorithm-remove-duplicates}. Let $J_{\alpha,\tau}^{(n)}$ denote the set of indices returned when running Algorithm \ref{algorithm-conformal} (in Line \ref{line:CallFirstAlg} of Algorithm \ref{algorithm-remove-duplicates}) with calibration samples $\big( (X_i,Y_i) \big)_{i=1}^n$ and parameters $\alpha,\tau \in (0,1)$, and let $J_{\alpha,\tau}^{(n)}=\textnormal{NA}$ if Algorithm \ref{algorithm-conformal} returns an error.

Recall that $J_{\alpha,\tau}^*$ is a fixed element of $\argmin_{J \subseteq \ca_{\alpha,\tau}} \{ \vert J \vert \}$, which under Assumption \ref{assump:TieBreakingDeterministic} was set to be the element of $\ca_{\alpha,\tau}$ that is first among all elements in $\ca_{\alpha,\tau}$ with the minimum cardinality, according to the ordering used in Line \ref{line:find_minimal_set_cover} of Algorithm \ref{algorithm-conformal}. We will first show that $\lim_{n \to \infty} \mathbb{P}(J_{\alpha,\tau}^{(n)}=J_{\alpha,\tau}^*)=1$. To do this, let $$\ca_{\alpha,\tau}^{(n)} = \Bigl\{ J \subseteq [k] \ : \ \sum_{i=1}^n \mathbbm{1} \bigl\{ \max_{j \in J} \iou \big(Y_i, f(X_i,t_j) \big) > \tau\bigr\} \geq (1-\alpha)n  \Bigr\}$$ and note that in Line \ref{line:find_minimal_set_cover} of Algorithm \ref{algorithm-conformal}, $J_{\alpha,\tau}^{(n)}$ is set to the smallest element of $\ca_{\alpha,\tau}^{(n)}$ (where the preset ordering is used if $\ca_{\alpha,\tau}^{(n)}$ has multiple sets of the same cardinality). Because $J_{\alpha,\tau}^{(n)}$ and $J_{\alpha,\tau}^*$ are the smallest sets in $\ca_{\alpha,\tau}^{(n)}$ and $\ca_{\alpha,\tau}$ (with ties between sets of the same size broken according to the same ordering), $J_{\alpha,\tau}^* \in \ca_{\alpha,\tau}^{(n)}$ implies that either $J_{\alpha,\tau}^{(n)} \notin \ca_{\alpha,\tau}$ or $J_{\alpha,\tau}^{(n)}=J_{\alpha,\tau}^*$. Hence if both
$J_{\alpha,\tau}^{(n)} \in \ca_{\alpha,\tau}$ and $ J_{\alpha,\tau}^* \in \ca_{\alpha,\tau}^{(n)}$ then $J_{\alpha,\tau}^{(n)}=J_{\alpha,\tau}^*$. Thus by monotonicity of probability measure and the union bound, 

$$\begin{aligned}
    \mathbb{P}(J_{\alpha,\tau}^{(n)}=J_{\alpha,\tau}^*) & \geq \mathbb{P} \big( \{ J_{\alpha,\tau}^{(n)} \in \ca_{\alpha,\tau} \} \cap \{ J_{\alpha,\tau}^* \in \ca_{\alpha,\tau}^{(n)} \}\big) 
    \\ & = 1 - \mathbb{P}\big( \{ J_{\alpha,\tau}^{(n)} \in \ca_{\alpha,\tau} \}^c \cup \{ J_{\alpha,\tau}^* \in \ca_{\alpha,\tau}^{(n)} \}^c \big)
    \\ & \geq 1 - \mathbb{P}\big( \{ J_{\alpha,\tau}^{(n)} \in \ca_{\alpha,\tau} \}^c \big) -\mathbb{P} \big( \{ J_{\alpha,\tau}^* \in \ca_{\alpha,\tau}^{(n)} \}^c \big)
    \\ & = 1  - \mathbb{P}\big(  J_{\alpha,\tau}^{(n)} \in \ca_{\alpha,\tau}^c  \big)-\mathbb{P}\big(  J_{\alpha,\tau}^{(n)} = \text{NA}  \big)-\mathbb{P} \big( \{ J_{\alpha,\tau}^* \in \ca_{\alpha,\tau}^{(n)} \}^c \big).
\end{aligned}.$$

Let $\cx =\mathbb{R}^{W \times H \times C} \times [W] \times [H]$ and $\cy =\{0,1\}^{W \times H}$ denote the input and output space for $f(\cdot,t)$, respectively, and define $h : \cx \times \cy \to \{0,1\}$ by $$h(x,y) \equiv \begin{cases}
    1 & \text{if } \iou \big(y , f(x,t_j) \big) > \tau \text{ for some } j \in J_{\alpha,\tau}^*, 
    \\ 0 & \text{otherwise.} \end{cases}$$ Observe that by Assumption \ref{assump:StrictFeasibilitySmallestSet}, $$\e[h(X,Y)]= \mathbb{P} \Big( \max_{j \in J_{\alpha,\tau}^*} \bigl\{ \iou \big(Y,f(X,t_j) \big) \bigr\} > \tau \Big) = \mu_{\tau}(J_{\alpha,\tau}^*) > 1-\alpha,$$ where the penultimate step follows by definition of $\mu_{\tau}(J)$ in Equation \eqref{eq:MuJDef} and the last step follows from Assumption \ref{assump:StrictFeasibilitySmallestSet}. Since by Assumption \ref{assump:IID}, $h(X_i,Y_i) \in \{0,1\}$ are IID for $i=1,\dots,n$, by the weak law of large numbers \begin{equation}\label{eq:ConvergenceInProbH}
   \frac{1}{n}\sum_{i=1}^n h(X_i,Y_i) \xrightarrow{p} \mu_{\tau}(J_{\alpha,\tau}^*) > 1-\alpha \quad \text{as} \quad n \to \infty. \end{equation} Since, by definition of $h(\cdot,\cdot)$ and $\ca_{\alpha,\tau}^{(n)}$,  $\sum_{i=1}^n h(X_i,Y_i) \geq (1-\alpha) n$ if and only if $J_{\alpha,\tau}^* \in \ca_{\alpha,\tau}^{(n)}$, $$\mathbb{P} \big( \{ J_{\alpha,\tau}^* \in \ca_{\alpha,\tau}^{(n)} \}^c \big) = \mathbb{P} \Big( \frac{1}{n} \sum_{i=1}^n h(X_i,Y_i) < (1-\alpha) \Big) \leq \mathbb{P} \Big(  \Big| \frac{1}{n} \sum_{i=1}^n h(X_i,Y_i) - \mu_{\tau}(J_{\alpha,\tau}^*) \Big| \geq \mu_{\tau}(J_{\alpha,\tau}^*)-(1-\alpha) \Big).$$ Taking the limit as $n \to \infty$ of each side, by \eqref{eq:ConvergenceInProbH} and the definition of convergence in probability, $$\limsup\limits_{n \to \infty} \mathbb{P} \big( \{ J_{\alpha,\tau}^* \in \ca_{\alpha,\tau}^{(n)} \}^c \big)  \leq  \limsup\limits_{n \to \infty} \mathbb{P} \Big(  \Big| \frac{1}{n} \sum_{i=1}^n h(X_i,Y_i) - \mu_{\tau}(J_{\alpha,\tau}^*) \Big| \geq \mu_{\tau}(J_{\alpha,\tau}^*)-(1-\alpha) \Big)=0.$$ By nonnegativity of probabilities $\lim_{n \to \infty} \mathbb{P} \big( \{ J_{\alpha,\tau}^* \in \ca_{\alpha,\tau}^{(n)} \}^c \big)=0$. Also note that since Assumption \ref{assump:StrictFeasibilitySmallestSet} implies Assumption \ref{assump:FeasibleThresh}, by Lemma \ref{lemma:ProbToZero}, $\lim_{n \to \infty} \mathbb{P}(J_{\alpha,\tau}^{(n)}=\textnormal{NA})=0$ and $\lim_{n \to \infty} \mathbb{P}(J_{\alpha,\tau}^{(n)} \in \ca_{\alpha,\tau}^c)=0$. Combining these results with an earlier inequality, $$\begin{aligned}
       \liminf_{n \to \infty} \mathbb{P}(J_{\alpha,\tau}^{(n)}=J_{\alpha,\tau}^*) \geq 1   - \limsup_{n \to \infty} \mathbb{P}\big(  J_{\alpha,\tau}^{(n)} \in \ca_{\alpha,\tau}^c  \big)- \limsup_{n \to \infty}\mathbb{P}\big(  J_{\alpha,\tau}^{(n)} = \text{NA}  \big)- \limsup_{n \to \infty}\mathbb{P} \big( \{ J_{\alpha,\tau}^* \in \ca_{\alpha,\tau}^{(n)} \}^c \big)=1.
   \end{aligned}$$
Since probabilities are upper bounded by $1$, $\lim_{n \to \infty} \mathbb{P}(J_{\alpha,\tau}^{(n)}=J_{\alpha,\tau}^*)=1$.

We are now ready to investigate the limiting behavior of Algorithm \ref{algorithm-remove-duplicates}. To do this, let $\mathcal{P}_k$ denote the collection of subsets of $[k]$. Observe that, $$\mathbb{P} \Big(  \max_{\hat{y} \in C_{\alpha,\tau,\eta}^{(n)}(\xt) } \bigl\{ \iou( \yt,\hat{y}) \bigr\} \geq \tilde{\theta}^{(n)} \Big)  = \sum_{J \in \mathcal{P}_k \cup \{\text{NA}\}} \mathbb{P} \Big( \Bigl\{  \max_{\hat{y} \in C_{\alpha,\tau,\eta}^{(n)}(\xt) } \bigl\{ \iou( \yt,\hat{y}) \bigr\} \geq \tilde{\theta}^{(n)} \Bigr\} \cap \{J_{\alpha,\tau}^{(n)}=J \} \Big).$$  Taking the limit as $n \to \infty$ of each side and noting that for $J \neq J_{\alpha,\tau}^*$, $\lim_{n \to \infty} \mathbb{P}(J_{\alpha,\tau}^{(n)}=J)=0$ (because $\lim_{n \to \infty} \mathbb{P}(J_{\alpha,\tau}^{(n)}=J_{\alpha,\tau}^*)=1$), and hence $\lim_{n \to \infty} \mathbb{P} \big( E_n \cap \{J_{\alpha,\tau}^{(n)}=J\}  \big)=0$ for any sequence of events $E_n$ and $J \neq J_{\alpha,\tau}^*$, it follows that $$\begin{aligned} \lim_{n \to \infty} \mathbb{P} \Big(  \max_{\hat{y} \in C_{\alpha,\tau,\eta}^{(n)}(\xt) } \bigl\{ \iou( \yt,\hat{y}) \bigr\} \geq \tilde{\theta}^{(n)} \Big) &  = \lim_{n \to \infty}  \mathbb{P} \Big( \Bigl\{  \max_{\hat{y} \in C_{\alpha,\tau,\eta}^{(n)}(\xt) } \bigl\{ \iou( \yt,\hat{y}) \bigr\} \geq \tilde{\theta}^{(n)} \Bigr\} \cap \{J_{\alpha,\tau}^{(n)}=J_{\alpha,\tau}^* \} \Big).\end{aligned}$$

To simplify the above expression let $\mathcal{B}$ denote the set of all possible collections of binary masks in $\cy$. Define $u : \cx \times \mathcal{P}_k \to \mathcal{B}$ be the function such that $$u(x,J) \equiv \texttt{UNIQUE} \big( \{ f(x,t_j) \ : \ j \in J \}, \eta \big) \quad \text{for all } x \in \cx \quad \text{and} \quad J \subseteq [k],$$ where \texttt{UNIQUE} is the procedure defined in Algorithm \ref{algorithm-remove-duplicates}. Note that by Assumption \ref{assump:TieBreakingDeterministic}, the procedure  \texttt{UNIQUE} is deterministic and hence $u(\cdot,\cdot)$ is a deterministic function. Next, define $M: \cx \times \cy \times \mathcal{P}_k \to [0,1]$ to be the function given by $$M(x,y,J) \equiv \max_{\hat{y} \in u(x,J)} \bigl\{ \iou(y,\hat{y}) \bigr\} \quad \text{for all } x \in \cx, \quad y \in \cy, \quad \text{and} \quad J \subseteq [k].$$ Note that $M: \cx \times \cy \times \mathcal{P}_k \to [0,1]$ is also a deterministic function. Finally let $\cq_{\alpha}(\cdot)$ be a deterministic operator that takes in a sequence of real numbers as input and returns the $\alpha$ empirical quantile (as defined in Definition 2.9 of \citet{angelopoulos2025theoreticalfoundationsconformalprediction}). Now observe that by Algorithm \ref{algorithm-remove-duplicates}$$\max_{\hat{y} \in C_{\alpha,\tau,\eta}^{(n)}(\xt) } \bigl\{ \iou( \yt,\hat{y}) \bigr\} = M(\xt,\yt,J_{\alpha,\tau}^{(n)}) \quad \text{and} \quad \tilde{\theta}^{(n)} = \cq_{\alpha} \Big( \bigl\{ M(X_i,Y_i, J_{\alpha,\tau}^{(n)}) \bigr\}_{i=1}^n \Big),$$ provided that $J_{\alpha,\tau}^{(n)} \neq \text{NA}.$ Hence letting $o(1)$ denote any term that converges to $0$ as $n \to \infty$ and letting $B_n$ and $E_n$ denote the following events $$B_n \equiv  \Bigl\{  M(\xt,\yt, J_{\alpha,\tau}^*) \geq \cq_{\alpha} \Big( \bigl\{ M(X_i,Y_i, J_{\alpha,\tau}^*) \bigr\}_{i=1}^n \Big) \Bigr\} \quad \text{and} \quad E_n \equiv \{ J_{\alpha,\tau}^{(n)}=J_{\alpha,\tau}^*\},$$ an earlier result simplifies to $$\begin{aligned}  \mathbb{P} \Big(  \max_{\hat{y} \in C_{\alpha,\tau,\eta}^{(n)}(\xt) } \bigl\{ \iou( \yt,\hat{y}) \bigr\} \geq \tilde{\theta}^{(n)} \Big) &  =  \mathbb{P} \Big( \Bigl\{  \max_{\hat{y} \in C_{\alpha,\tau,\eta}^{(n)}(\xt) } \bigl\{ \iou( \yt,\hat{y}) \bigr\} \geq \tilde{\theta}^{(n)} \Bigr\} \cap \{J_{\alpha,\tau}^{(n)}=J_{\alpha,\tau}^* \} \Big) +o(1)
\\ & =  \mathbb{P} \Big( \Bigl\{  M(\xt,\yt, J_{\alpha,\tau}^{(n)}) \geq \cq_{\alpha} \Big( \bigl\{ M(X_i,Y_i, J_{\alpha,\tau}^{(n)}) \bigr\}_{i=1}^n \Big) \Bigr\} \cap E_n \Big) +o(1) 
\\ & =  \mathbb{P} \Big( \Bigl\{  M(\xt,\yt, J_{\alpha,\tau}^*) \geq \cq_{\alpha} \Big( \bigl\{ M(X_i,Y_i, J_{\alpha,\tau}^*) \bigr\}_{i=1}^n \Big) \Bigr\} \cap E_n \Big) +o(1)
\\ & = \mathbb{P}(B_n \cap E_n) +o(1). 
\end{aligned}$$ Above the penultimate step holds because $J_{\alpha,\tau}^{(n)}=J_{\alpha,\tau}^*$ under the event $E_n$. Now we will show that for each $n \in \mathbb{Z}_+$, $\mathbb{P}(B_n^c) < \alpha$. To do this fix $n \in \mathbb{Z}_+$, and define $\xi_{n+1} \equiv M(\xt,\yt, J_{\alpha,\tau}^*)$, and $\xi_i \equiv M(X_i,Y_i, J_{\alpha,\tau}^*)$ for each $i=1,\dots,n$. Since $J_{\alpha,\tau}^*$ is a fixed set of indices and by Assumption \ref{assump:IID}, $\big( (X_i,Y_i) \big)_{i=1}^n \stackrel{\text{iid}}{\sim} \mathbb{P}$ and independently $(\xt,\yt) \sim \mathbb{P}$, it follows that $(\xi_i)_{i=1}^{n+1}$ is an exchangeable sequence of random variables. Note that $$\mathbb{P}(B_n^c) = \mathbb{P} \Big( M(\xt,\yt, J_{\alpha,\tau}^*) < \cq_{\alpha} \Big( \bigl\{ M(X_i,Y_i, J_{\alpha,\tau}^*) \bigr\}_{i=1}^n \Big)  \Big) =\mathbb{P} \Big( \xi_{n+1} < \cq_{\alpha} \big( \bigl\{ \xi_i \bigr\}_{i=1}^n \big)  \Big) < \alpha.$$ Above the last step follows from Fact 2.15(ii) in \cite{angelopoulos2025theoreticalfoundationsconformalprediction} because $(\xi_i)_{i=1}^{n+1}$ is an exchangeable sequence of random variables. Hence we have shown that $\mathbb{P}(B_n^c)< \alpha$ and this argument holds for any $n \in \mathbb{Z}_+$.  Since probability measures are upper bounded by $1$ we can apply this result to get that that for all $n \in \mathbb{Z}_+$, $$\mathbb{P}(B_n \cap E_n) = \mathbb{P}(B_n)+\mathbb{P}(E_n)-\mathbb{P}(B_n \cup E_n) \geq \mathbb{P}(B_n)+\mathbb{P}(E_n)-1=\mathbb{P}(E_n)-\mathbb{P}(B_n^c)>\mathbb{P}(E_n)-\alpha.$$ Since we have shown that  $\lim\limits_{n \to \infty} \mathbb{P}(E_n)=\lim\limits_{n \to \infty} \mathbb{P}(J_{\alpha,\tau}^{(n)}=J_{\alpha,\tau}^*)=1$, we can take the limit as $n \to \infty$ of each side of the above inequality to get that $\liminf_{n \to \infty} \mathbb{P}(B_n \cap E_n) \geq 1-\alpha$. Combining this with an earlier result we get that $$\liminf_{n \to \infty} \mathbb{P} \Big(  \max_{\hat{y} \in C_{\alpha,\tau,\eta}^{(n)}(\xt) } \bigl\{ \iou( \yt,\hat{y}) \bigr\} \geq \tilde{\theta}^{(n)} \Big) = \liminf_{n \to \infty} \Big( \mathbb{P}(B_n \cap E_n) +o(1) \Big) = \liminf_{n \to \infty}  \mathbb{P}(B_n \cap E_n) \geq 1-\alpha.$$


\end{proof}

\newpage
\section{IoU histograms for calibration data}
Our conformal algorithm requires the selection of a target error rate $\alpha$ and IoU threshold $\tau$. To guide our choices, we plotted histograms of the maximum IoU achievable for each calibration data point under any value of the tunable parameter $T$ (Figure \ref{fig:calibration-iou-histograms}). For each experiment, we minimized the target error rate $\alpha$ while maximizing IoU threshold $\tau$, subject to feasibility on the calibration dataset. 

\begin{figure*}[h!]
  \centering
  \includegraphics[width=0.9\textwidth]{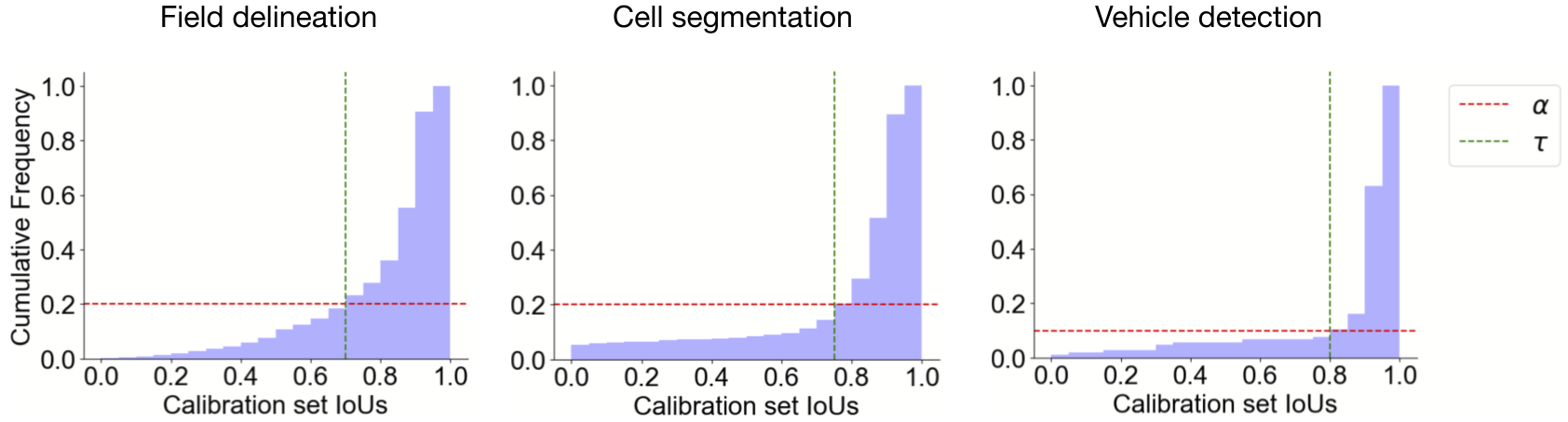}
  \caption{\textbf{Cumulative distributions of maximum IoUs over all $T$ values, for calibration data points.} For each experiment, we minimized the target error rate $\alpha$ while maximizing IoU threshold $\tau$, subject to feasibility (i.e., there must be less than $\alpha$ fraction of calibration points with $\text{IoU} < \tau$).}\label{fig:calibration-iou-histograms}
\end{figure*}

\newpage
\section{Why the naive best parameter baseline has low coverage}\label{appendix:naive-best-parameter-baseline}
While our conformal method outputs a confidence set of multiple parameter values, the naive best parameter baseline outputs the single parameter value that results in the highest coverage on the calibration dataset. The disadvantage of the naive best parameter method is that there does not always exist a single parameter value that results in accurate predictions for most calibration points. In particular, in our instance segmentation setting, there does not always exist a single parameter value that results in mask predictions with high IoU for $(1-\alpha)$ fraction of the calibration points. Thus, the naive best parameter baseline often cannot guarantee high coverage on the test set. 

As an illustrative example (Figure \ref{fig:single-parameter-baseline}), we take the field delineation dataset and plot the fraction of calibration predictions with IoU greater than $\tau = 0.7$ at each single parameter value $T$ (which ranges from 0 to 1). The target coverage for the field delineation example in our paper was $(1-\alpha) = 0.8$, shown as a red line in the figure. We see that there is no single parameter value that attains the target coverage of 80\% on the calibration set; the best parameter value ($T=0.24$) results in about 67\% coverage. Since the naive best parameter baseline only outputs a single parameter value, it will fail to reach the target coverage on the calibration set and is thus unlikely to reach the target coverage on the test set. In contrast, our conformal method constructs a set of multiple parameter values such that for about 80\% of the test points, at least one value will result in high IoU. 

\begin{figure*}[h!]
  \centering
  \includegraphics[width=0.35\textwidth]{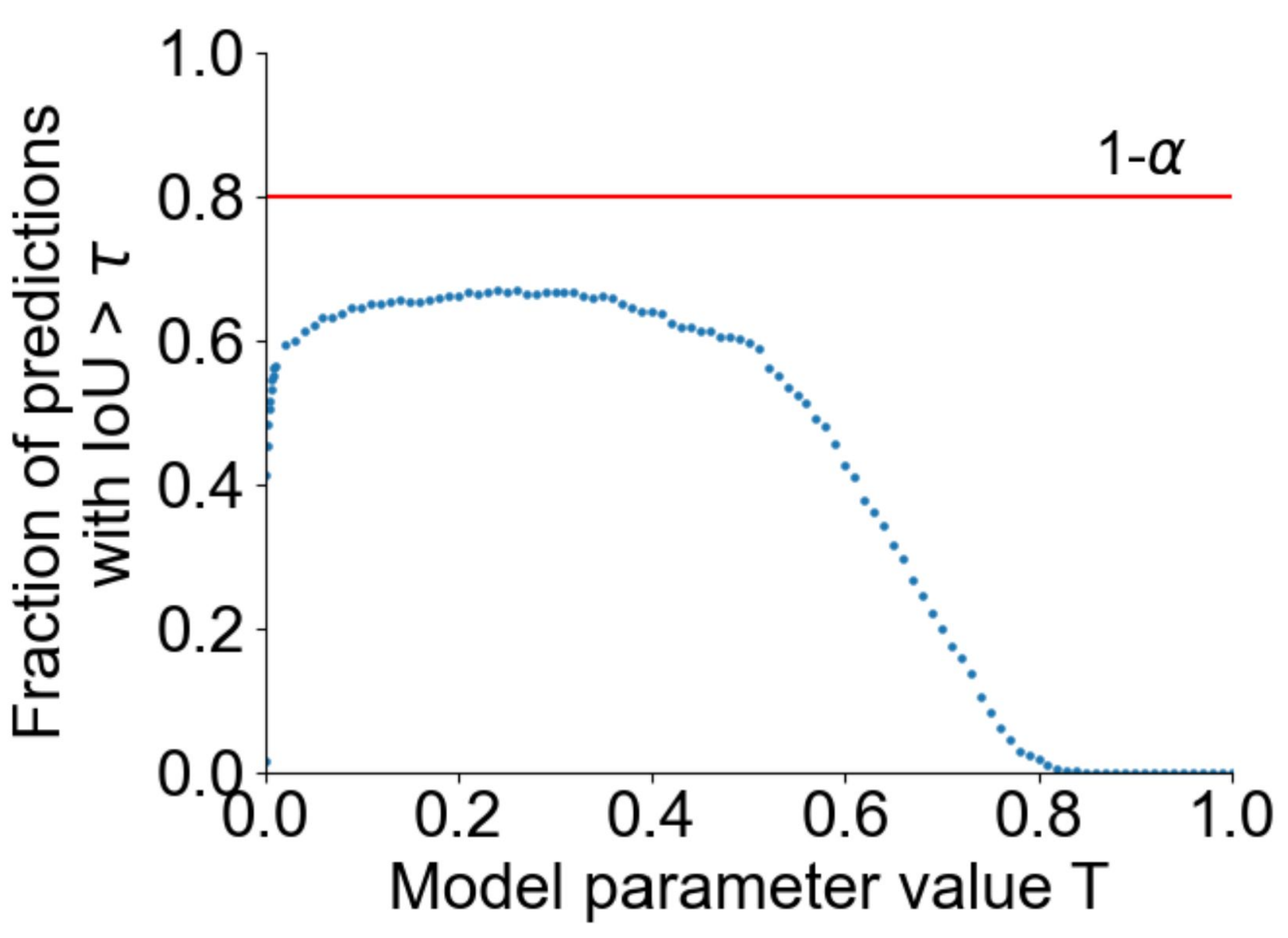}
  \caption{\textbf{Fraction of calibration predictions with IoU greater than $\tau = 0.7$ at each model parameter value $T$ for the field delineation dataset.} No single parameter value attains the target coverage of 80\% on the calibration set.}\label{fig:single-parameter-baseline}
\end{figure*}

\newpage
\section{Comparison to morphological dilation baseline}\label{appendix:dilation-baseline}
We compare our method to the morphological dilation-based method from \citet{mossina2025conformal}. This method constructs conformal confidence sets for binary segmentation using dilation, i.e., adding a margin to a predicted mask, where the margin size is a fixed number determined using calibration data. This algorithm only guarantees that the true mask is contained within the dilated mask with high probability, and does not guarantee high IoU.

\paragraph{Experimental setup} For the comparison, we use our field delineation dataset. For each sampled pixel, we first generate a single mask prediction at the best parameter value as defined in the previous paragraph, which is $T=0.24$. For each calibration pixel, we dilate the predicted mask by $D$ pixels, for all values of $D$ from 0 to 30, and record the smallest $D$ value where the dilated mask contains the true field. Finally we compute the 80th percentile of these minimum $D$ values, which is $D=2$. The conformal guarantee is that for a test pixel, with 80\% probability, dilating the predicted mask by $D=2$ pixels will result in a mask that contains the true field.

\paragraph{Results} The dilation-based method results in low coverage for IoU, undersegmentation, and lack of flexibility in predictions. We visualize results for five example test points (Figure \ref{fig:dilation-images}). Three of the five dilated masks have IoU less than 0.7. Overall, only 54.4\% of the 917 test points have dilated masks with IoU greater than 0.7. This low coverage is expected as the dilation-based method does not guarantee high IoUs, as discussed above. Our paper’s method achieves 79.7\% coverage. The dilated masks tend to undersegment the fields because the dilation-based method is designed to output a single mask that contains the true field with high probability (see Appendix \ref{appendix:structural-uncertainty-metrics} for a quantitative comparison of the undersegmentation against our method). Finally, each dilated mask is constrained to be a modification of a single prediction, while our method’s conformal sets can contain a diverse range of possible fields from multiple predictions, allowing users to select from multiple plausible options. 

\begin{figure*}[h!]
  \centering
  \includegraphics[width=0.5\textwidth]{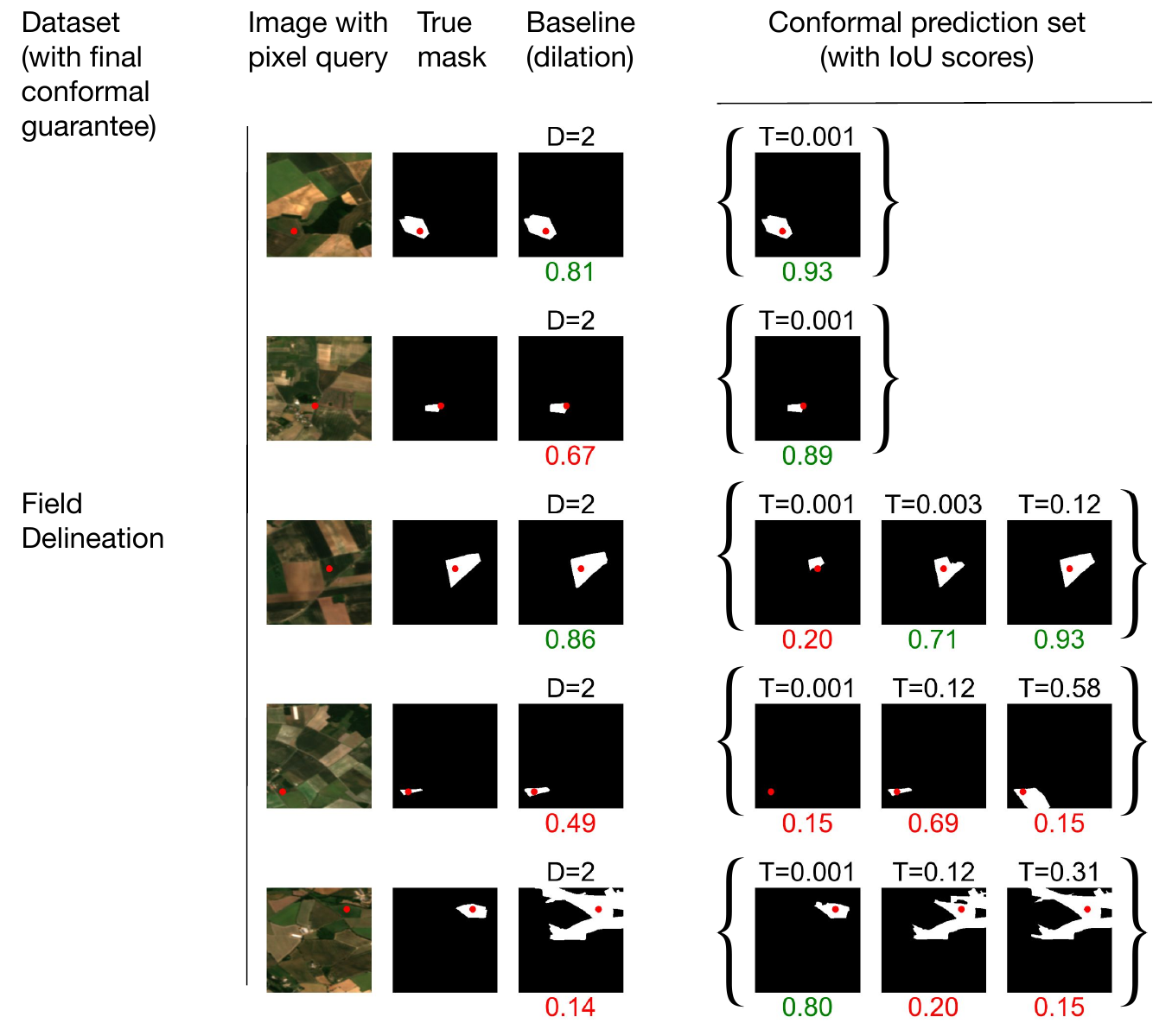}
  \caption{\textbf{Conformal prediction sets for example test queries, compared to dilation baseline predictions.} The third column shows the mask prediction for each test point dilated by $D=2$ pixels, while the rightmost column shows the conformal prediction sets generated by our method. While most of our conformal sets contain at least one mask with high IoU (green) with the ground truth, the dilation-based method does not provide guarantees on IoU.}\label{fig:dilation-images}
\end{figure*}

\newpage
\section{Quantifying structural uncertainty}\label{appendix:structural-uncertainty-metrics} 

We are interested in capturing structural uncertainty in the size and shape of objects, including over- and under-segmentation: in applications such as field delineation, there is often uncertainty about whether adjacent regions should be treated as a single object or split into multiple objects. Qualitatively we find that our prediction sets contain masks of diverse sizes and shapes, while previous works only predict a single mask or modifications of a single baseline mask (e.g., dilation or boundary expansion). 

To quantify our method’s performance, we will use the oversegmentation score $S_{\text{over}}$ and undersegmentation score $S_{\text{under}}$ from \citet{persello2009novel}. For a true mask $Y$ and predicted mask $\hat{Y}$, they are defined as follows:
\begin{align*}
    S_{\text{over}} = 1 - |Y \cap \hat{Y}| / |Y| \\
    S_{\text{under}} = 1 - |Y \cap \hat{Y}| / |\hat{Y}|
\end{align*}
where the absolute value sign denotes the area of a mask. The scores range from 0 to 1, where 0 is a perfect match and higher scores correspond to more over- and under-segmentation respectively.

For each test point in the field delineation dataset, we compute these scores between the true mask and each of the masks in our conformal prediction set. We record the minimum oversegmentation and undersegmentation score over the masks in each prediction set. We also compute these scores for the dilation-based conformal baseline (Appendix \ref{appendix:dilation-baseline}). We plot histograms of the scores, comparing our method with the baseline (Figure \ref{fig:segmentation-scores}). We find that both methods have low oversegmentation, but the baseline method suffers from undersegmentation compared to our method. This is because the dilation-based method is designed to output a conformal mask that contains the true field, resulting in field predictions that are too large. In contrast, our method is designed to guarantee high IoU by outputting a set of multiple predictions, so that there is likely at least one prediction that avoids undersegmentation (as well as at least one prediction that avoids oversegmentation).

\begin{figure*}[h!]
  \centering
  \includegraphics[width=0.6\textwidth]{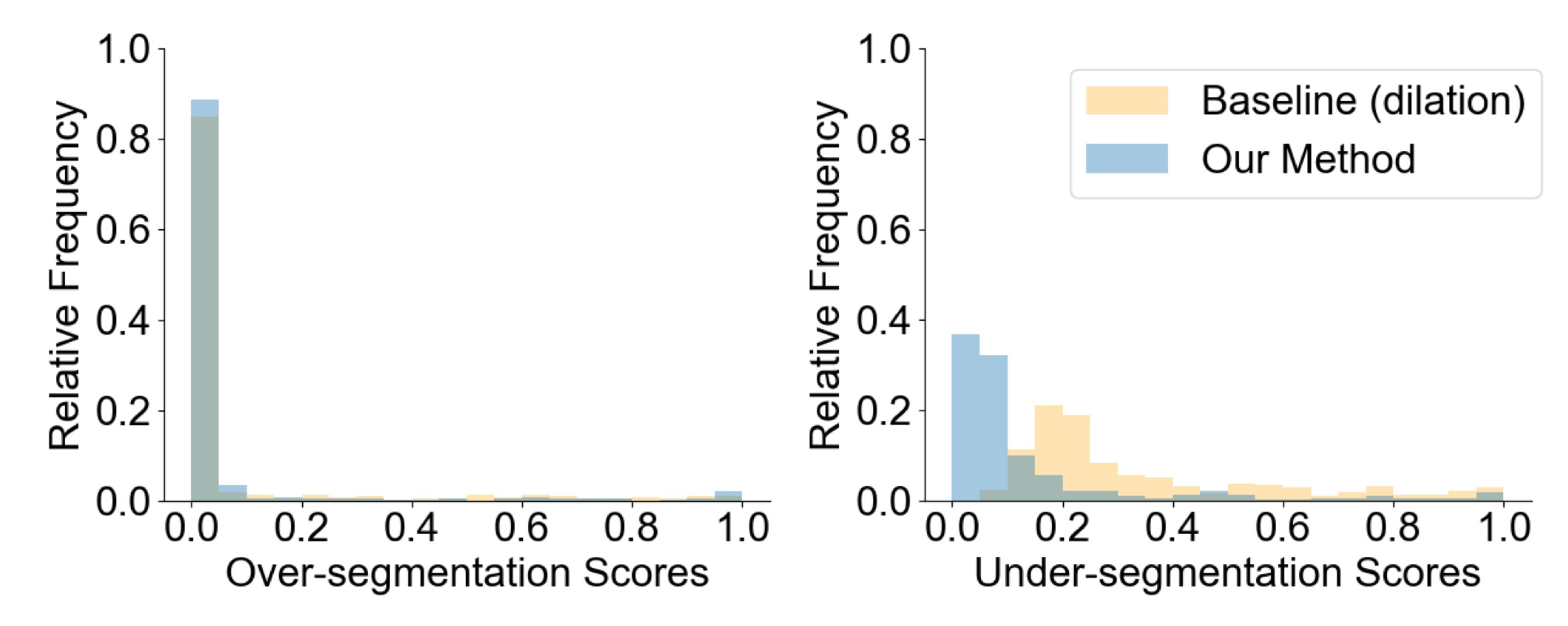}
  \caption{\textbf{Over- and under-segmentation scores for our method and the dilation-based conformal baseline, over the field delineation test set.} Both methods have low oversegmentation, but the baseline method suffers from undersegmentation.}\label{fig:segmentation-scores}
\end{figure*}

\newpage
\section{Modified conformal algorithm with finite sample guarantees}\label{appendix:finite-sample-guarantees}

\begin{algorithm}[!hbt]
\small
\caption{Compute conformal prediction set for instance segmentation, with finite sample guarantees}\label{algorithm-finite-sample-guarantees}
\begin{multicols}{2}
\begin{algorithmic}
\State \textbf{Input:} 
\begin{itemize}
    \item calibration data, randomly split into two sets of size $n_1$ and $n_2$: $\{(X_1,Y_1),\dots,(X_{n_1}, Y_{n_1})\}$ and $\{(X_{n_1+1},Y_{n_1+1}),\dots,(X_{n_1+n_2}, Y_{n_1+n_2})\}$ where each image has width $W$ and height $H$
    \item test input $X^\text{test}$
    \item ML model $f(X, T)$
    \item parameter values $t_1,\dots,t_k$
    \item error rate $0 < \alpha < 1$
    \item IoU threshold $0 < \tau < 1$
    \item duplicate prediction IoU threshold $0 < \eta < 1$
\end{itemize}
\State \textbf{Output:} 
\begin{itemize}
    \item conformal prediction set with duplicates removed $C_{\hat{\lambda}}(X^\text{test})$
\end{itemize}
\State
\end{algorithmic}
\begin{algorithmic}[1]
\State \textit{// duplicate removal function}
\Procedure{Unique*}{$C$, $\eta$} 
    \State $m = |C|$
    \If{$m = 1$}
        \Return $C$
    \Else
        \State $C' \gets \Call{Unique*}{C[1:m-1], \eta}$
        \If{there exists $\hat{Y}' \in  C'$ such that $\text{IoU}(C[m], \hat{Y}') > \eta$}
            \Return $C'$
        \Else{}
            \Return $C' \cup \{C[m]\}$
        \EndIf
    \EndIf
\EndProcedure
\State
\State \textit{// using $n_1$ datapoints: compute set $S_j$ of data indices ``covered" by each parameter index $j$ (i.e., datapoints where predicted mask $f(X_i, t_j)$ has IoU $> \tau$)}
\For{$j=1,2,\ldots,k$}
    \For{$i=1,2,\ldots,n_1$}
        \State $\rho_{ij} \gets \text{IoU}(Y_i, f(X_i, t_j))$
    \EndFor
    \State $S_j \gets \{i : \rho_{ij} > \tau \}$
\EndFor
\columnbreak
\State \textit{// create ordered list $L$ of parameter indices from highest to lowest priority via greedy algorithm}
\State $L \gets []$
\For{$j'=1,2,\ldots,k$}
    \State \textit{// find parameter index that covers greatest number of previously un-covered datapoints}
    \State $L[j'] \gets \argmax_{j\in[k]} |S_{j}|$ 
    \For{$j''=1,2,\ldots,k$}
        \State $S_{j''} \gets S_{j''} \setminus S_{L[j']}$
    \EndFor
\EndFor
\State 
\State \textit{// using remaining $n_2$ datapoints: use Conformal Risk Control (CRC) to compute smallest $\hat{\lambda} \in [k]$ such that the first $\hat{\lambda}$ parameter indices in $L$ result in a valid conformal prediction set}
\State $\hat{\lambda} \gets k+1$
\For{$\lambda=1,2,\ldots,k$}
    \For{$i=n_1+1,\ldots,n_1+n_2$}
        \State \textit{// compute ordered list of predictions using first $\lambda$ parameter indices in $L$ and remove duplicates}
        \State $C_{\lambda}(X_i) \gets \{ f(X_i, t_j) : j \in L[1:\lambda]\}$
        \State $C_{\lambda}(X_i)  \gets \Call{Unique*}{C_{\lambda}(X_i), \eta}$ 
        \State \textit{// compute loss function for whether $C_{\lambda}(X_i)$ contains a high IoU prediction}
        \State $l_i(\lambda) \gets \mathbbm{1}(\max_{\hat{Y}_i \in C_{\lambda}(X_i) } \text{IoU}(Y_i, \hat{Y}_i) \leq \tau)$
    \EndFor
    \State $\hat{r}(\lambda) \gets \frac{1}{n_2}\sum_{i=n_1+1}^{n_1+n_2}l_i(\lambda)$ \textit{// empirical risk}
    \If{$\hat{r}(\lambda) \leq \alpha - \frac{1-\alpha}{n_2}$} 
        \State $\hat{\lambda} \gets \lambda$
        \State \textbf{break}
    \EndIf
\EndFor
\State
\State \textit{// compute predictions for $X^\text{test}$ using the first $\hat{\lambda}$ parameter indices in $L$ and remove duplicates}
\If{$\hat{\lambda} = k+1$}
    \State $C_{\hat{\lambda}}(X^\text{test}) \gets \{0,1\}^{W \times H}$ \textit{// set of all possible masks}
\Else
    \State $C_{\hat{\lambda}}(X^\text{test}) \gets \{ f(X^\text{test}, t_j) : j \in L[1:\hat{\lambda}]\}$
    \State $C_{\hat{\lambda}}(X^\text{test}) \gets \Call{Unique*}{C_{\hat{\lambda}}(X^\text{test}), \eta}$
\EndIf
\State \Return $C_{\hat{\lambda}}(X^\text{test})$
\end{algorithmic}
\end{multicols}
\end{algorithm} 

We present a modified version of our conformal algorithm that provides finite sample guarantees (Algorithm \ref{algorithm-finite-sample-guarantees}). 

We use the notation defined in Section \ref{sec:method} of the main text. For this version of the algorithm, the calibration dataset is randomly split into two sets of size $n_1$ and $n_2$: $\{(X_1,Y_1),\dots,(X_{n_1}, Y_{n_1})\}$ and $\{(X_{n_1+1},Y_{n_1+1}),\dots,(X_{n_1+n_2}, Y_{n_1+n_2})\}$. We use the first set of $n_1$ datapoints to rank the parameter values $t_1, \dots, t_k$ from highest to lowest priority via a greedy algorithm based on their performance on the data. Then, using the remaining $n_2$ datapoints, we apply Conformal Risk Control (CRC) \cite{angelopoulos2022conformal} to compute the smallest value $\hat{\lambda} \in [k]$ such that the first $\hat{\lambda}$ parameters in the ranked list result in a valid conformal prediction set. The CRC method provides finite sample guarantees for the probability that for a new input $X^\text{test}$ (from the same distribution as the calibration set), the conformal prediction set $C_{\hat{\lambda}}(X^\text{test})$ will contain a mask that has high IoU with the true mask. We describe the algorithm in more detail below. 

\paragraph{Ranking the parameters via greedy algorithm} We use the first set of $n_1$ datapoints as follows. Similar to Algorithm \ref{algorithm-conformal}, for each $t_j$, we first compute the set $S_j$ of data indices $i$ where the predicted mask $f(X_i, t_j)$ and true mask $Y_i$ have IoU $> \tau$. We say that $S_j$ is the set of data indices ``covered" by the parameter index $j$.
\begin{align*}
    S_j = \bigl\{ i : \iou \big(Y_i, f(X_i,t_j) \big) > \tau \bigr\} .
\end{align*}
Next, we create an ordered list $L$ of parameter indices from highest to lowest priority via a greedy algorithm. At each of $k$ steps, we append to $L$ the parameter index $\argmax_{j\in[k]} |S_{j}|$ that covers the greatest number of datapoints that have not already been covered, and then update each of $S_1, \dots S_k$ to remove the newly covered data indices. This approach is similar to the greedy set cover procedure \cite{johnson1973approximation, lovasz1975ratio, chvatal1979greedy} used in Line \ref{line:find_minimal_set_cover} of Algorithm \ref{algorithm-conformal} to obtain a set of parameter values that covers $(1-\alpha)$ fraction of the datapoints. However, in our new algorithm, we do not stop appending parameter indices to $L$ at a specific coverage fraction, but continue until all $k$ parameters have been ranked. 

\paragraph{Applying CRC to compute number of parameters to keep} Using the remaining $n_2$ datapoints, we apply CRC to compute the smallest $\hat{\lambda} \in [k]$ such that the first $\hat{\lambda}$ parameters in $L$ result in a valid conformal prediction set. For each $1 \leq \lambda \leq k$, we compute the empirical risk $\hat{r}(\lambda)$ as follows. We loop through the $n_2$ datapoints; for each point $(X_i, Y_i)$, we compute an ordered list of predictions $C_{\lambda}(X_i)$ using the first $\lambda$ parameters from $L$.
\begin{align*}
    C_{\lambda}(X_i) = \{ f(X_i, t_j) : j \in L[1:\lambda]\}.
\end{align*}
We remove duplicate predictions using a user-provided IoU threshold $0 < \eta < 1$. The duplicate removal procedure $\Call{Unique*}{}$ takes in an ordered list of predictions $C$ and recursively runs duplicate removal on the first $|C|-1$ elements of $C$; we call this intermediate output $C'$. If there exists $\hat{Y}' \in  C'$ such that $\hat{Y}'$ and the last element of $C$ have $\text{IoU} > \eta$, the procedure returns $C'$. Otherwise, the procedure returns the union of $C'$ and the last element of $C$. This ensures that every discarded prediction is within IoU $\eta$ of one that is kept, and no two predictions that are kept are within IoU $\eta$ of each other. We have 
\begin{align*}
    C_{\lambda}(X_i)  \gets \Call{Unique*}{C_{\lambda}(X_i), \eta}.
\end{align*}
We note that unlike the duplicate removal procedure $\Call{Unique}{}$ used in Algorithm \ref{algorithm-remove-duplicates}, $\Call{Unique*}{}$ ensures that $C_{\lambda}(X_i) \subseteq C_{\lambda+1}(X_i)$. This will allow us to construct a monotone loss function $l_i(\lambda)$, which is a necessary condition for CRC. For each $\lambda \in [k+1]$, we define the loss $l_i(\lambda)$ for the datapoint as 1 if none of the predictions in $C_{\lambda}(X_i)$ have IoU $> \tau$ with the true label $Y_i$, and 0 otherwise.
\begin{align*}
    l_i(\lambda) = \mathbbm{1}(\max_{\hat{Y}_i \in C_{\lambda}(X_i) } \text{IoU}(Y_i, \hat{Y}_i) \leq \tau).
\end{align*}
For each value of $\lambda$, we compute the empirical risk $\hat{r}(\lambda)$ by averaging the losses of the $n_2$ datapoints. This average is equivalent to the fraction of datapoints where none of the predictions (after duplicate removal) have sufficiently high IoU.
\begin{align*}
    \hat{r}(\lambda) = \frac{1}{n_2}\sum_{i=n_1+1}^{n_1+n_2}l_i(\lambda).
\end{align*}
If $\hat{r}(\lambda)$ is less than or equal to the CRC threshold of $\alpha - \frac{1-\alpha}{n_2}$, then we set $\hat{\lambda}$ equal to the current value of $\lambda$. Otherwise, we continue to iterate through the remaining values of $\lambda$ until we find a value where the empirical risk is less than or equal to the threshold. 

\paragraph{Compute conformal prediction set for new input} If there is no value of $\lambda \in [k]$ at which $\hat{r}(\lambda)$ is sufficiently low, then our algorithm returns $\{0,1\}^{W \times H}$, the set of all possible masks, so that $\hat{r}(k+1)=0$. (In practice, to avoid storing $2^{W \times H}$ masks, the algorithm would return an error message.) Otherwise, for a new test input $X^\text{test}$, we compute a set of predictions using the first $\hat{\lambda}$ parameters from $L$.
\begin{align*}
    C_{\hat{\lambda}}(X^\text{test}) = \{ f(X^\text{test}, t_j) : j \in L[1:\hat{\lambda}]\}.
\end{align*}
We remove duplicates to obtain the final conformal prediction set
\begin{align*}
    C_{\hat{\lambda}}(X^\text{test}) \gets \Call{Unique*}{C_{\hat{\lambda}}(X^\text{test}), \eta}.
\end{align*}

The resulting finite-sample guarantee is that, for a new test input $X^\text{test}$, with probability at least $1-\alpha$, there exists some $\hat{Y}^\text{test} \in C_{\hat{\lambda}}(X^\text{test})$ with $\text{IoU}(Y^\text{test}, \hat{Y}^\text{test}) > \tau$. This is stated formally in the following theorem.
\begin{theorem}
Let $C_{\hat{\lambda}}(\xt)$ denote the set of segmentation masks returned when running Algorithm \ref{algorithm-finite-sample-guarantees} with calibration samples $\big( (X_i,Y_i) \big)_{i=1}^{n_1}$ and $\big( (X_i,Y_i) \big)_{i=n_1+1}^{n_1+n_2}$ and parameters $\alpha,\tau,\eta \in (0,1)$. Under Assumption \ref{assump:IID} (calibration and test samples are i.i.d.), $$\mathbb{P} \Big(  \max_{\hat{y} \in C_{\hat{\lambda}}(\xt) } \bigl\{ \iou( \yt,\hat{y}) \bigr\} > \tau \Big) \geq 1-\alpha.$$
\end{theorem}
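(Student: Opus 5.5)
The plan is to reduce the statement to the standard Conformal Risk Control guarantee of \citet{angelopoulos2022conformal}. We apply it conditionally on the first calibration split, and use the sample splitting to restore exchangeability.

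\textbf{Step 1: condition on the first split.} Condition on $\mathcal{D}_1=\big((X_i,Y_i)\big)_{i=1}^{n_1}$. The ordered list $L$ is a deterministic function of $\mathcal{D}_1$, since the greedy $\argmax$ uses a fixed tie-breaking rule, as in Assumption \ref{assump:TieBreakingDeterministic}. By Assumption \ref{assump:IID}, given $\mathcal{D}_1$ the points $(X_{n_1+1},Y_{n_1+1}),\dots,(X_{n_1+n_2},Y_{n_1+n_2}),(\xt,\yt)$ are still IID, and hence exchangeable.

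\textbf{Step 2: define the losses.} For $\lambda\in[k]$ and any pair $(x,y)$, set $\ell(x,y;\lambda)=\mathbbm{1}\big(\max_{\hat y\in \Call{Unique*}{\{f(x,t_j):j\in L[1:\lambda]\},\eta}}\iou(y,\hat y)\le\tau\big)$, and set $\ell(\cdot;k+1)\equiv 0$. The last choice is justified because the full set $\{0,1\}^{W\times H}$ contains $y$ itself, and $\iou(y,y)=1>\tau$. These losses lie in $[0,1]$ and are deterministic given $L$.

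\textbf{Step 3: monotonicity (the main obstacle).} CRC requires $\ell(x,y;\lambda)$ to be nonincreasing in $\lambda$, and this is where the choice of \Call{Unique*}{} matters. Unrolling the recursion, \Call{Unique*}{} processes the list left to right. It keeps an element if and only if that element has IoU at most $\eta$ with every previously kept element. Hence the output on the prefix $L[1:\lambda+1]$ equals the output on $L[1:\lambda]$, possibly with one extra mask appended. This gives $C_\lambda(x)\subseteq C_{\lambda+1}(x)$, so the maximum IoU is nondecreasing in $\lambda$ and the loss is nonincreasing. The step $\lambda=k\to k+1$ is trivially fine because $\ell(\cdot;k+1)=0$. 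I would state this nesting property as a short inductive claim on $\lambda$.

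\textbf{Step 4: apply CRC.} The rule $\hat\lambda=\min\{\lambda:\hat r(\lambda)\le \alpha-\frac{1-\alpha}{n_2}\}$, with $\hat\lambda=k+1$ if no such $\lambda$ exists, is equivalent to $\frac{n_2}{n_2+1}\hat r(\lambda)+\frac{1}{n_2+1}\le\alpha$. Here the bound $B=1$ and the endpoint has zero loss. This is exactly the CRC selection rule. I would verify it directly with the usual argument: define the oracle $\lambda'$ computed from all $n_2+1$ points including the test point. Then $\hat\lambda\ge\lambda'$, so monotonicity gives $\ell_{\text{test}}(\hat\lambda)\le\ell_{\text{test}}(\lambda')$. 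Exchangeability then gives $\e[\ell_{\text{test}}(\lambda')\mid\mathcal{D}_1]=\e\big[\frac{1}{n_2+1}\sum_{i}\ell_i(\lambda')\mid\mathcal{D}_1\big]\le\alpha$. Since $\ell$ is an indicator, this says $\mathbb{P}\big(\max_{\hat y\in C_{\hat\lambda}(\xt)}\iou(\yt,\hat y)\le\tau\mid\mathcal{D}_1\big)\le\alpha$. Taking expectation over $\mathcal{D}_1$ gives the claimed $1-\alpha$ bound.
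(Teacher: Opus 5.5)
Your proposal is correct and follows the paper's route: the prefix-nesting property of \textsc{Unique*} makes the loss nonincreasing in $\lambda$, the all-masks set gives zero loss at $\lambda=k+1$, and Conformal Risk Control then yields the bound. The paper cites Theorem 1 of \citet{angelopoulos2022conformal} directly, whereas you spell out three steps it leaves implicit: conditioning on the first split, checking that $\alpha-\frac{1-\alpha}{n_2}$ is the CRC threshold with $B=1$, and re-deriving the guarantee via the oracle argument.
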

\begin{proof} 
For each calibration point $(X_i, Y_i)$, the loss function $l_i(\lambda) = \mathbbm{1}(\max_{\hat{Y}_i \in C_{\lambda}(X_i) } \text{IoU}(Y_i, \hat{Y}_i) \leq \tau)$ is non-increasing in $\lambda$. This is because by design $C_{\lambda}(X_i) \subseteq C_{\lambda+1}(X_i)$, so the maximum IoU of the prediction set can only improve as $\lambda$ increases. Furthermore, we define $C_{k+1}(X_i)$ as the set of all possible masks $\{0,1\}^{W \times H}$, which is guaranteed to contain the true mask, so $l_i(\lambda_{\text{max}}) = l_i(k+1) = 0 < \alpha$. Thus, we can apply Conformal Risk Control \cite{angelopoulos2022conformal}.

Our algorithm computes the smallest value of $\lambda$ such that the empirical risk $\hat{r}(\lambda) = \frac{1}{n_2}\sum_{i=n_1+1}^{n_1+n_2}l_i(\lambda)$ falls below the CRC threshold:
\begin{align*}
    \hat{\lambda} = \min\{\lambda: \hat{r}(\lambda) \leq \alpha - \frac{1-\alpha}{n_2}\}.
\end{align*}

For a new test point $(\xt, \yt)$, let $l_{\text{test}}(\lambda) = \mathbbm{1}(\max_{\hat{y} \in C_{\lambda}(\xt) } \text{IoU}(\yt, \hat{y}) \leq \tau)$. Then by Theorem 1 of \citet{angelopoulos2022conformal}, 
\begin{align*}
    \mathbb{P} \Big(  \max_{\hat{y} \in C_{\hat{\lambda}}(\xt) } \bigl\{ \iou( \yt,\hat{y}) \bigr\} \leq \tau \Big) = \mathbb{E}[l_{\text{test}}(\hat{\lambda})] \leq \alpha.
\end{align*}
\end{proof}
We note that the parameter-ranking step using the first $n_1$ datapoints is not necessary for our algorithm to be valid, since CRC can be applied regardless of how $L$ is ordered. However, if we do not rank the parameters based on their performance on the data, $\hat{\lambda}$ may be very large, i.e. we may need to keep many parameters in order to ensure sufficiently low empirical risk; this is computationally costly and may result in large prediction sets that are inconvenient to use. 

\paragraph{Comparison to asymptotic-guarantee algorithm} The finite-sample-guarantee version of the algorithm will generally give similar results as the asymptotic-guarantee version presented in Section \ref{sec:method} of the main text. This is because the greedy algorithm used in the parameter-ranking step of Algorithm \ref{algorithm-finite-sample-guarantees} is closely related to the greedy set cover algorithm used in Line \ref{line:find_minimal_set_cover} of Algorithm \ref{algorithm-conformal}. In Algorithm \ref{algorithm-conformal}, we use the greedy set cover algorithm to find a set of parameters that covers $(1-\alpha)$ fraction of the calibration points; analogously, in the CRC step of Algorithm \ref{algorithm-finite-sample-guarantees}, we add parameters from the ranked list $L$ until we cover $(1-\alpha+\frac{1-\alpha}{n_2})$ fraction of the $n_2$ calibration points. 

However, one advantage of Algorithm \ref{algorithm-conformal} is that we can improve the greedy set cover by brute-forcing over all smaller subsets of $[k]$ in order of cardinality until we either find a true minimal-size set cover or confirm that the greedy set cover is already minimal. This is not possible in Algorithm \ref{algorithm-finite-sample-guarantees} because we are restricted to parameter sets of the form $L[1:\lambda]$, i.e., adding parameters in order of their greedy algorithm ranking. Thus, Algorithm \ref{algorithm-finite-sample-guarantees} cannot guarantee that the set of parameters used to produce the conformal prediction set is minimal-size. 

Another advantage of the asymptotic algorithm is that the duplicate removal procedure $\Call{Unique}{C, \eta}$ used in Algorithm \ref{algorithm-remove-duplicates} brute-forces over all subsets of $C$ in order of increasing cardinality until it finds a \textit{minimal-size} subset such that every discarded prediction is within IoU $\eta$ of one that is kept. In contrast, the duplicate removal procedure $\Call{Unique*}{C, \eta}$ used in Algorithm \ref{algorithm-finite-sample-guarantees} outputs a subset of predictions such that every discarded prediction is within IoU $\eta$ of one that is kept and no two predictions that are kept are within IoU $\eta$ of each other, but this subset is not necessarily minimal-size. ($\Call{Unique*}{C, \eta}$ ensures that even after duplicate removal, $C_{\lambda}(X_i) \subseteq C_{\lambda+1}(X_i)$ and thus the maximum IoU-based loss function $l_i(\lambda)$ is non-increasing in $\lambda$, which is a necessary condition for CRC.) As a result, the final prediction sets produced by Algorithm \ref{algorithm-finite-sample-guarantees} may be larger than those from the asymptotic algorithm.

We note that the finite-sample algorithm has the advantage of computational efficiency because its greedy parameter ranking procedure and duplicate removal procedure run in polynomial time, making it useful in cases where brute-forcing is not computationally tractable.

\newpage
\section{Datasets and Models}\label{appendix:datasets}
\paragraph{Agricultural field delineation} Delineating agricultural fields from satellite imagery is an important task for monitoring land use and supporting precision agriculture. The state-of-the-art uses deep semantic segmentation models, such as U-Nets and their variants, followed by post-processing steps like watershed segmentation or connected components analysis to separate adjacent fields \cite{waldner2021detect, kerner2025fields}. 

Each satellite image contains multiple field instances. We use 350 images and instance segmentation labels from the Fields of the World (FTW) test dataset for France \cite{kerner2025fields},  
and we randomly divide the data into 250 calibration images and 100 test images. 
We randomly sample 50 pixels from each image and only keep pixels that are in a field, resulting in 2476 calibration pixels and 917 test pixels.

For predictions, we use the pretrained FTW U-Net model provided by \citet{kerner2025fields} to estimate field boundary probabilities at each pixel, followed by a watershed algorithm to convert the probability maps into field instances. The tunable parameter $T$ corresponds to the watershed threshold: as $T$ increases, neighboring fields are merged, yielding larger predicted instances. We consider $T$ values from 0 to 0.009 in steps of 0.001, and from 0.01 to 1 in steps of 0.01. 

\paragraph{Cell segmentation} Segmenting individual cells from microscopy images is a common task in biology and medicine, supporting cell counting and morphology characterization. A widely used approach is Cellpose \cite{stringer2021cellpose}, which has recently been extended with Segment Anything (Cellpose-SAM) \cite{pachitariu2025cellpose} to improve robustness across diverse imaging conditions.

We evaluate our method on 52 grayscale images with instance segmentation labels from the Cellpose test dataset. 
We split the data into 30 calibration images and 22 test images and randomly sample 50 pixels from each image, only keeping pixels in a cell (or other object) instance. This results in 925 calibration pixels and 659 test pixels.

For predictions, we use the Cellpose-SAM model to segment each image into cell instances. The tunable parameter $T$ corresponds to the ``cell extent'' threshold: increasing $T$ causes more pixels to be classified as background, producing smaller predicted instances. We sweep $T$ from –5 to 5 in steps of 1. 

\paragraph{Vehicle detection} Detecting vehicles in street-level imagery is a central task in autonomous driving and traffic monitoring.

We evaluate our method on 200 images with car instance labels from the Cityscapes street scene validation dataset \cite{cordts2016cityscapes}. We split the data into 100 calibration and 100 test images and randomly sample 20 pixels from each image, only keeping pixels in the \emph{car} class. This yields 105 calibration and 121 test pixels.

For predictions, we use the Segment Anything Model (SAM) \cite{kirillov2023segment}, which takes an RGB image and a pixel coordinate and returns mask probability maps for the object containing that pixel. SAM produces three candidate mask probability maps per query, ranked by confidence. There are two tunable parameters: $T_1$, which indexes the candidate mask ($1,2,3$), and $T_2$, the mask probability threshold. As $T_2$ increases, more pixels are classified as not being in the mask, so the predicted object instances become smaller. We iterate over $T_1$ values in $\{1, 2, 3\}$ and $T_2$ values from 0 to 1 at increments of 0.05. 

\newpage
\section{Experimental validation of finite sample version of algorithm}\label{appendix:finite-sample-experiments}

We perform an empirical comparison of the asymptotic and finite sample guarantee versions of our conformal algorithm. We run the finite sample algorithm (Algorithm \ref{algorithm-finite-sample-guarantees}) using the same error rate $\alpha$ and IoU threshold $\tau$ as in the original experiments from the main text, as well as the same calibration and test sets. The finite sample algorithm requires randomly splitting the calibration pixels into two sets; we use a 50-50 split. 

We evaluate the coverage of the finite sample algorithm on the test set and compare it to the coverage of the asymptotic version (Table \ref{table:finite-sample-version}). Similar to the asymptotic version, the finite sample coverages are close to the target coverages $(1-\alpha)$, as expected. We note that the finite sample coverages are computed using the IoU threshold $\tau$ instead of $\tilde{\theta}$ (which is used for the asymptotic version) since the finite sample algorithm does not need to re-calibrate the IoU threshold after duplicate removal.

\renewcommand{\tabcolsep}{5pt}
\begin{table}[h]
\caption{\textbf{Asymptotic and Finite Sample Algorithm Coverages of Test Set Predictions.}}\label{table:finite-sample-version}
\begin{center}
\tiny
\begin{tabular}{p{1.3cm}p{0.3cm}p{0.3cm}p{1.7cm}p{0.8cm}p{1.8cm}}
\toprule
EXAMPLE & $\tau$ & $\tilde{\theta}$ & TARGET COVERAGE \newline $(1-\alpha)$ & ASYMPTOTIC \newline COVERAGE & FINITE-SAMPLE \newline COVERAGE \\ \midrule 
Field delineation & 0.7 & 0.696 & 0.8 & 0.797 & 0.779 \\ \midrule
Cell segmentation & 0.75 & 0.760 & 0.8 & 0.835 & 0.816 \\ \midrule
Vehicle detection & 0.8 & 0.842 & 0.9 & 0.843 & 0.893 \\ \bottomrule
\end{tabular}
\end{center}
\end{table}

Furthermore, we visualize the distribution of conformal prediction set sizes for test data points after removing duplicate predictions (Figure \ref{fig:finite-sample-conformal-set-sizes}). Like the asymptotic algorithm, the finite sample algorithm results in diverse set sizes after duplicate removal, adapting to query difficulty (although the maximum prediction set sizes differ slightly for the asymptotic and finite sample algorithms in the field delineation and cell segmentation examples).

\begin{figure*}[h!]
  \centering
  \includegraphics[width=0.8\textwidth]{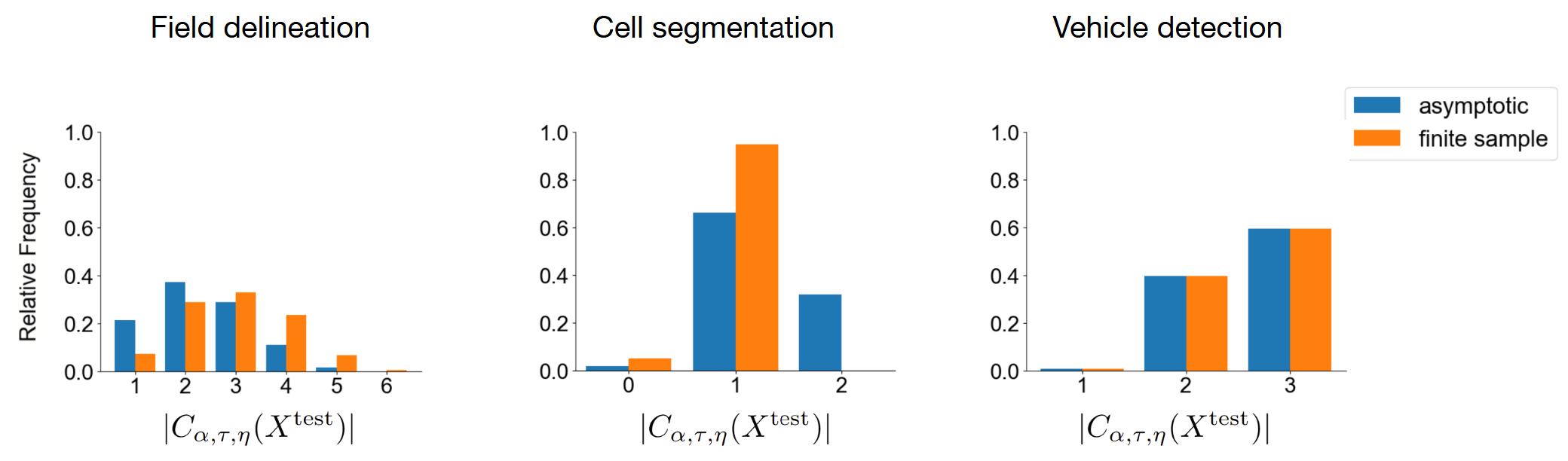}
  \caption{\textbf{Distribution of conformal prediction set sizes on test set for asymptotic and finite sample versions of the algorithm.}}\label{fig:finite-sample-conformal-set-sizes}
\end{figure*}

\newpage
\section{Sensitivity analyses on conformal algorithm parameters}\label{appendix:sensitivity-analyses}
We run experiments to analyze the robustness of our algorithm, including sensitivity analyses on error rate $\alpha$, IoU threshold $\tau$, duplicate removal IoU threshold $\eta$, tunable parameter space size $k$, and calibration set size $n$ (Figure \ref{fig:sensitivity-analyses}). 

\begin{figure*}[h!]
  \centering
  \includegraphics[width=0.75\textwidth]{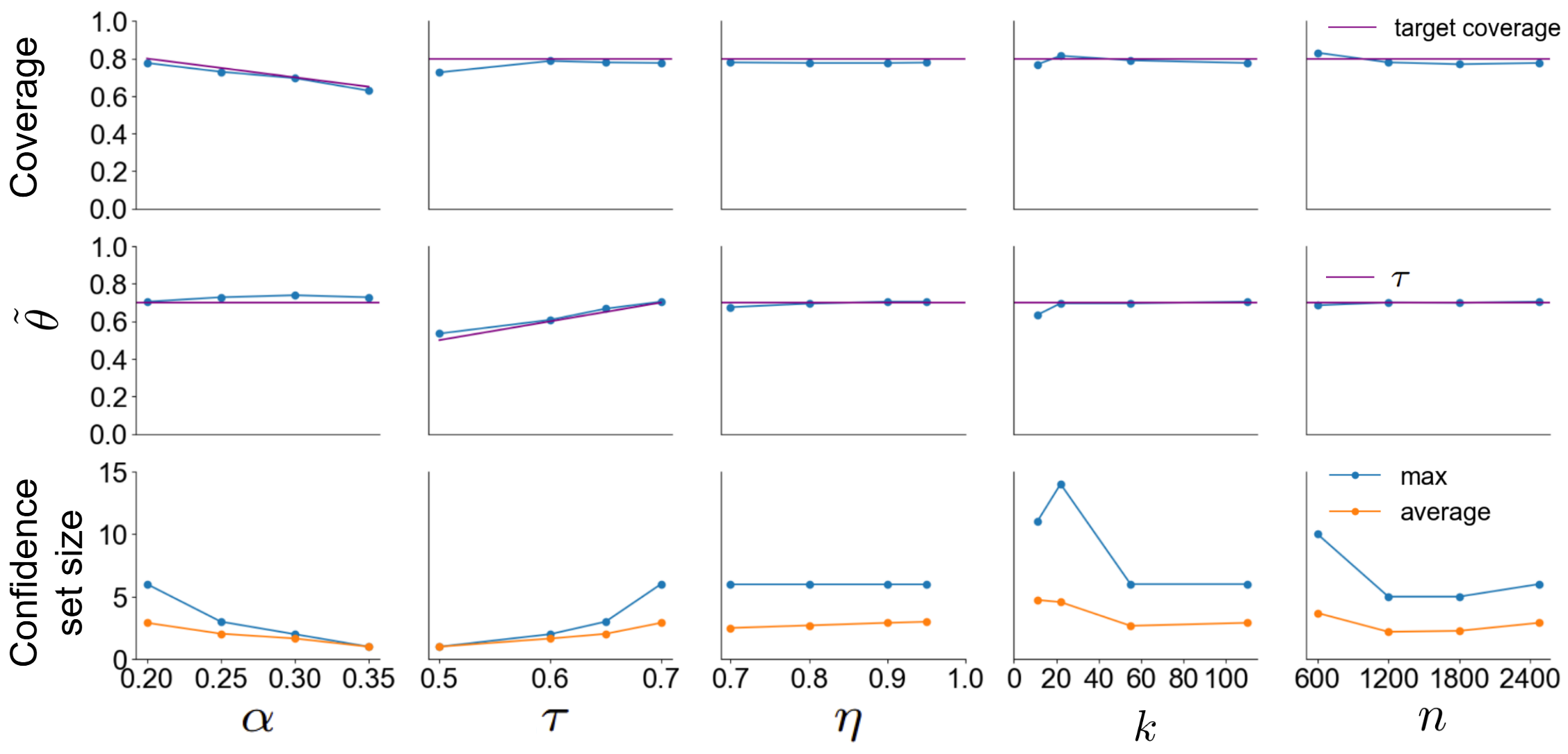}
  \caption{\textbf{Sensitivity analyses on conformal algorithm parameters: error rate $\alpha$, IoU threshold $\tau$, duplicate removal IoU threshold $\eta$, tunable parameter space size $k$, and calibration set size $n$.}}\label{fig:sensitivity-analyses}
\end{figure*}

We use the field delineation dataset with the same default parameters as in the main text: $\alpha=0.2$, $\tau=0.7$, $\eta=0.9$, $k=110$, and $n=2476$. For each experiment, we vary one parameter while holding the others constant. We use the calibration set to compute a confidence set for the tunable parameter, re-calibrate the IoU threshold after duplicate removal from $\tau$ to $\tilde{\theta}$, and evaluate performance on the test set of 917 points. For each experiment, we record $\tilde{\theta}$, the test coverage, maximum prediction set size, and average prediction set size on the test set. (For all experiments, we use the asymptotic-guarantee version of our algorithm with the greedy set cover subprocedure for computational efficiency.)

\paragraph{Varying error rate $\alpha$} We vary error rate $\alpha = 0.2, 0.25, 0.3, 0.35$. For all experiments, coverage is close to the target coverage of $1-\alpha$, showing that our method is robust to changes in $\alpha$. The re-calibrated IoU threshold $\tilde{\theta}$ stays close to $\tau=0.7$ as expected. As $\alpha$ increases, the prediction set size decreases because fewer predictions are required in order to attain the desired $1-\alpha$ coverage. 

\paragraph{Varying IoU threshold $\tau$} We vary IoU threshold $\tau = 0.5, 0.6, 0.65, 0.7$. For $\tau=0.5$, the coverage (0.73) is slightly lower than the target coverage of $1-\alpha=0.8$; we believe this is due to random variation between the calibration and test sets. For all other experiments, coverage is close to the target coverage. The re-calibrated IoU threshold $\tilde{\theta}$ stays close to $\tau$ as expected, for all values of $\tau$. As $\tau$ increases, the prediction set size increases because more predictions are required in order to attain IoU greater than $\tau$ on at least one prediction in the set. 

\paragraph{Varying duplicate removal IoU threshold $\eta$} We vary duplicate removal IoU threshold $\eta = 0.7, 0.8, 0.9, 0.95$. For all experiments, coverage is close to the target coverage of $1-\alpha=0.8$, showing that our method is robust to changes in $\eta$. The re-calibrated IoU threshold $\tilde{\theta}$ stays close to $\tau$; even for $\eta$ as low as 0.7, $\tilde{\theta}$ remains fairly high (0.675), so duplicate removal only has a minor effect on whether the final prediction set contains a high enough IoU prediction. The maximum prediction set size stays the same because it is calibrated using $\alpha$ and $\tau$ only and does not depend on $\eta$ (see Algorithm \ref{algorithm-conformal}). However, the average prediction set size increases from 2.5 to 3.0 because fewer elements are removed during duplicate removal.

\paragraph{Varying tunable parameter space size $k$} We vary tunable parameter space size $k = 11, 22, 55, 110$. (In the paper, we use $k=110$ tunable parameter values from 0 to 0.009 in steps of 0.001, and from 0.01 to 1 in steps of 0.01. For the sensitivity analysis, we create sets of $k = 11, 22, 55$ tunable parameter values by increasing the step sizes by a factor of 10, 5, and 2 respectively.) For $k=11$, no subset of the parameters could attain the target IoU threshold $\tau=0.7$ on $1-\alpha=0.8$ fraction of calibration points, so we used the full set of 11 parameters in lieu of a confidence set. This resulted in a lower re-calibrated IoU threshold ($\tilde{\theta}=0.63$). The re-calibrated IoU threshold $\tilde{\theta}$ stays close to $\tau=0.7$ for all other values of $k$. For all experiments, coverage at $\tilde{\theta}$ is close to the target coverage of $1-\alpha=0.8$. Finally, when $k$ is small, the maximum and average prediction set sizes are large because with a less granular tunable parameter space, more parameters are needed in order to attain high maximum IoU (and when $k=11$, we are unable to attain IoU $>0.7$ on 80\% of the calibration points even if we use all the parameters).

\paragraph{Varying calibration set size $n$} We vary calibration set size $n = 600, 1200, 1800, 2476$. For all experiments, coverage is close to the target coverage of $1-\alpha=0.8$. The re-calibrated IoU threshold $\tilde{\theta}$ stays close to $\tau=0.7$ as expected. However, for the smallest value $n=600$, the maximum and average prediction set sizes are substantially larger than for the other values of $n$, indicating that the calibration results are less stable at small calibration set sizes; the computed confidence set of tunable parameter values is more sensitive to random variation in the calibration data when $n$ is small. When we tried further decreasing $n$ to 300, no subset of the parameters could attain IoU $>0.7$ on 80\% of calibration points.

\end{document}